\DeclareMathOperator{\E}{\mathbb{E}}
\DeclareMathOperator{\Var}{\mathbb{V}}
\let\P\relax
\newcommand{\bigO}{\mathcal{O}}
\DeclareMathOperator{\P}{\mathbb{P}}
\DeclareMathOperator{\defeq}{\dot{=}}
\DeclareMathOperator{\ind}{\mathds{1}}
\DeclareMathOperator{\pa}{pa}
\DeclareMathOperator{\ch}{ch}
\DeclareMathOperator{\an}{an}
\DeclareMathOperator{\mb}{mb}
\newcommand\numberthis{\addtocounter{equation}{1}\tag{\theequation}}
\newtheorem{theorem}{Theorem}
\title{Hindsight Network Credit Assignment:\\Efficient Credit Assignment in Networks of Discrete Stochastic Units}
\author {
    Kenny Young
}
\begin{document}

\maketitle


{\let\thefootnote\relax\footnote{{Code to reproduce the experiments is available at: https://github.com/kenjyoung/HNCA\_code\_supplement.}}}
\addtocounter{footnote}{-1}

\begin{abstract}
Training neural networks with discrete stochastic variables presents a unique challenge. Backpropagation is not directly applicable, nor are the reparameterization tricks used in networks with continuous stochastic variables. To address this challenge, we present Hindsight Network Credit Assignment (HNCA), a novel gradient estimation algorithm for networks of discrete stochastic units. HNCA works by assigning credit to each unit based on the degree to which its output influences its immediate children in the network. We prove that HNCA produces unbiased gradient estimates with reduced variance compared to the REINFORCE estimator, while the computational cost is similar to that of backpropagation. We first apply HNCA in a contextual bandit setting to optimize a reward function that is unknown to the agent. In this setting, we empirically demonstrate that HNCA significantly outperforms REINFORCE, indicating that the variance reduction implied by our theoretical analysis is significant and impactful. We then show how HNCA can be extended to optimize a more general function of the outputs of a network of stochastic units, where the function is known to the agent. We apply this extended version of HNCA to train a discrete variational auto-encoder and empirically show it compares favourably to other strong methods. We believe that the ideas underlying HNCA can help stimulate new ways of thinking about efficient credit assignment in stochastic compute graphs.
\end{abstract}

Using discrete stochastic units within neural networks is appealing for a number of reasons, including representing multimodal distributions, modeling discrete choices, providing regularization and facilitating exploration. However, training such units efficiently and accurately presents challenges, as backpropagation is not directly applicable, nor are the reparameterization tricks~\citep{kingma2013auto, rezende2014stochastic} that are typically used with continuous stochastic units. Despite these challenges, discrete stochastic units have played an important role in recent empirical successes in both text-to-image generation
~\citep{ramesh2021zero} and model based reinforcement learning~\citep{hafner2021mastering}. Hence, techniques for efficiently training networks of discrete stochastic units have the potential to be of significant practical interest.  

Prior work has proposed a number of techniques for producing either biased, or unbiased estimates of gradients for discrete stochastic units. \citet{bengio2013estimating} propose an unbiased REINFORCE~\citep{williams1992simple} style estimator, as well as a biased but low variance estimator which replaces a random variable with its expectation during backpropagation. \citet{tang2013learning} propose an EM procedure which maximizes a variational lower bound on the loss. \citet{mnih2014neural} propose several techniques to reduce the variance of a REINFORCE style estimator, including subtracting a learned baseline and normalizing by a moving average standard deviation. \citet{maddison2016concrete} and \citet{jang2016categorical} each propose a biased estimator based on a continuous relaxation of discrete outputs. \citet{tucker2017rebar} use such a continuous relaxation to derive a control variate for a REINFORCE style estimator, resulting in a variance reduced \textit{unbiased} gradient estimator. \citet{grathwohl2017backpropagation} and \citet{gu2015muprop} also explore the use of control variates with discrete random variables. \citet{yin2019arm} provide a variance reduced unbiased estimator, called ARM, based on a particular reparameterization and antithetic sampling. \citet{dong2020disarm} further reduce the variance of ARM by marginalizing over the reparameterization step.

We introduce an unbiased, and computationally efficient estimator for the gradients of stochastic units which provably reduces gradient estimate variance compared to REINFORCE. Our estimator works by assigning credit to each unit based on how much it impacts the outputs of its immediate children. Our approach is inspired by Hindsight Credit Assignment (HCA; \citet{harutyunyan2019hindsight}) for reinforcement learning (RL), hence we call it Hindsight Network Credit Assignment (HNCA).

Aside from HCA, perhaps the most closely related work is the Local Expectation Gradients (LEG) approach of \citet{titsias2015local}. In fact, the gradient estimator used in HNCA can be seen as an instance of the LEG estimator. However, the generic expression for the LEG estimator makes it unclear when and how it can be efficiently computed. This has led to suggestions in the literature that LEG tends to be too computationally expensive to be practical~\citep{tucker2017rebar,mnih16vimco}. 

The present work extends the work of \citet{titsias2015local} in several ways. First, while LEG may be computational expensive in the general case, for the common case of a network of Bernoulli units, with firing probability parameterized by a linear transformation of their inputs followed by a nonlinear activation, HNCA provides an efficient message passing procedure.\footnote{A similarly procedure applies to units with softmax activation, though we do not explore this empirically in this work.} In this case, the resulting computational cost is similar to that of Backpropagation. This efficiency allows us to straightforwardly apply HNCA to multi-layer Bernoulli networks, while the analysis and experiments of \citet{titsias2015local} focus on single-layer (fully factorized) stochastic networks. We further demonstrate that a simple baseline subtraction, similar to that employed by~\citet{mnih2014neural}, drastically improves performance when applying HNCA to multi-layer networks. While \citet{titsias2015local} focus on the case where the agent has access to the function being optimized, we also present HNCA in a contextual bandit setting where an agent operates online, outputting an action at each time-step and observing a single sampled reward as a result. Interestingly, in the contextual bandit setting, we can still compute local expectations for each hidden unit without the need to resample the reward. Finally, we prove that HNCA provides a variance reduction over REINFORCE.

In taking inspiration from RL to train networks of stochastic units, our work is related to work on CoAgent Networks~\citep{ thomas2011conjugate,kostas2020asynchronous} that formalizes framing stochastic networks as collectives of interacting RL agents.


In addition to the immediate application to stochastic neural networks, we believe the insights presented in this work can help pave the way for new ways of thinking about efficient credit assignment in stochastic compute graphs, including perhaps the RL setting.

\section{HNCA in a Contextual Bandit Setting}
\label{bandit}
We first formulate HNCA in a contextual bandit setting. In this setting, an agent interacts with an environment in a series of time-steps.\footnote{We suppress the time-step, for example writing the context as $X$ instead of $X_t$.} At each time-step, the environment presents the agent with an i.i.d. random context $X\in\mathcal{X}$ (for example the pixels of an image). The agent then selects an action from a discrete set of choices $A\in\mathcal{A}$ (for example a guess of what class the image belongs to). The environment responds to the agent's choice with a reward $R=R(X,A)$, where $R: \mathcal{X},\mathcal{A}\mapsto\mathbb{R}$ is an unknown reward function (for example a reward of 1 for guessing the correct class and 0 otherwise). The agent's goal is to select actions which result in as much reward as possible.

In our case, the agent consists of a network of stochastic computational units. Let $\Phi$ be a random variable corresponding to the output of a particular unit. For each unit, $\Phi$ is drawn from a parameterized \textit{policy} ${\pi_\Phi(\phi|b)\defeq\P(\Phi=\phi|\pa(\Phi)=b)}$ conditioned on $\pa(\Phi)=b$, its parents in the network.\footnote{Expectations and probabilities are taken with respect to all random variables in the network, and the context.} Each unit's policy is differentiably parameterized by a unique set of parameters $\theta_\Phi\in\mathbb{R}^d$. A unit's parents $\pa(\Phi)$ may include the output of other units, as well as the context $X$. We focus on the case where $\Phi$ takes values from a discrete set. We will use $\ch(\Phi)$ to refer to the children of $\Phi$, that is, the set of outputs of all units for which $\Phi$ is an input.\footnote{We may also apply $\ch(\cdot)$ or $\pa(\cdot)$ to sets, in which case it has the obvious meaning of the union of the elementwise applications.} We assume the network has a single output unit, which selects the action $A$ sent to the environment.

The goal is to tune the network parameters to increase $\E[R]$. Towards this, we will construct an unbiased estimator of the gradient $\frac{\partial\E[R]}{\partial\theta_\Phi}$ for the parameters of each unit, and update the parameters according to the estimator. 

Directly computing the gradient of the output probability with respect to the parameters for a given input, as we might do with backpropagation for a deterministic network, is generally intractable for discrete stochastic networks. Instead, we can define a local REINFORCE estimator, $\hat{G}^{\text{RE}}_{\Phi}\defeq\frac{\partial\log(\pi_\Phi(\Phi|\pa(\Phi)))}{\partial\theta_\Phi}R$. It is well known that $\hat{G}^{\text{RE}}_{\Phi}$ is an unbiased estimator of $\frac{\partial\E[R]}{\partial\theta_\Phi}$ (see Appendix~\ref{local_REIN_unbiased} for a proof). However, $\hat{G}^{\text{RE}}_{\Phi}$ tends to have high variance.


\subsection{HNCA Gradient Estimator}
HNCA exploits the causal structure of the network to assign credit to each unit's output based on how it impacts the output of its immediate children. Assume $\Phi$ is a nonoutput unit and define $\mb(\Phi)\defeq\{\ch(\Phi),\pa(\Phi),\pa(\ch(\Phi))\setminus\Phi\}$ as a notational shorthand. Note that $\mb(\Phi)$ is a Markov blanket~\citep{pearl1988probabilistic} for $\Phi$, meaning that conditioned on $\mb(\Phi)$, $\Phi$ is independent of all other variables in the network as well as the reward $R$. Beginning from the expression for $\hat{G}^{\text{RE}}_{\Phi}$, we can rewrite $\frac{\partial\E[R]}{\partial\theta_\Phi}$ as follows:
\begin{align*}
    \frac{\partial\E[R]}{\partial\theta_\Phi}
    &\stackrel{(a)}{=}\E\left[\frac{\partial\log(\pi_\Phi(\Phi|\pa(\Phi)))}{\partial\theta_\Phi}R\right]\\
    &\stackrel{(b)}{=}\E\left[\E\left[\frac{\partial\log(\pi_\Phi(\Phi|\pa(\Phi)))}{\partial\theta_\Phi}R\middle|\mb(\Phi),R\right]\right]\\
    &\stackrel{(c)}{=}\E\left[\E\left[\frac{\partial\log(\pi_\Phi(\Phi|\pa(\Phi)))}{\partial\theta_\Phi}\middle|\mb(\Phi)\right]R\right]\\
    &\stackrel{(d)}{=}\E\left[\sum_\phi\frac{\P(\Phi=\phi|\mb(\Phi))}{\pi_{\Phi}(\phi|\pa(\Phi))}\frac{\partial\pi_\Phi(\phi|\pa(\Phi))}{\partial\theta_\Phi}R\right],\numberthis\label{HNCA_derivation_1}
\end{align*}
where $(a)$ follows from the unbiasedness of $\hat{G}^{\text{RE}}$, $(b)$ applies the law of total expectation, $(c)$ pulls $R$ out of the expectation and then uses the fact that $\mb(\Phi)$ forms a Markov blanket for $\Phi$, thus we can drop the conditioning on $R$ without loosing anything, and $(d)$ expands the inner expectation over $\Phi$ and rewrites the log gradient. This idea of taking a local expectation conditioned on a Markov blanket is similar to the LEG estimator proposed by~\citet{titsias2015local}. However, it is not immediately obvious how to compute this estimator efficiently. \citet{titsias2015local} provide a more explicit expression and empirical results for a fully factorized variational distribution. Here, we will go beyond this case to provide a computationally efficient way to compute the inner expression for more general networks of stochastic units. To begin, we apply Theorem 1 from Chapter 4 of the probabilistic reasoning textbook of~\citet{pearl1988probabilistic}, which implies that
\begin{equation}\label{markov_blanket_conditional_expression}
    \P(\Phi=\phi|\mb(\Phi))=\rho_\Phi(\phi)\pi_\Phi(\phi|\pa(\Phi)).
\end{equation}
where $\rho_\Phi(\phi)=\frac{\prod\limits_{C\in\ch(\Phi)}\pi_C(C|\pa(C)\setminus\Phi,\Phi=\phi)}{\sum\limits_{\phi'}\pi_\Phi(\phi'|\pa(\Phi))\prod\limits_{C\in\ch(\Phi)}\pi_C(C|\pa(C)\setminus\Phi,\Phi=\phi')}$. Intuitively,  $\rho_\Phi(\phi)$ is the relative counterfactual probability of the children of $\Phi$ taking the value they did had $\Phi$ been fixed to $\phi$. See Appendix~\ref{markov_blanket_conditional} for a full proof. Substituting this result into the expression within the expectation in Equation~\ref{HNCA_derivation_1}, we get that the following is an unbiased estimator of $\frac{\partial\E[R]}{\partial\theta_\Phi}$:
\begin{equation}\label{HNCA_estimator}
    \hat{G}^{\text{HNCA}}_{\Phi}\defeq\sum_\phi\rho_\Phi(\phi)\frac{\partial\pi_\Phi(\phi|\pa(\Phi))}{\partial\theta_\Phi}R,
\end{equation}
which we call the HNCA gradient estimator. Equation~\ref{HNCA_estimator} applies only to units for which $\ch(\Phi)\neq\emptyset$ and thus excludes the output unit $A$. In our contextual bandit experiments, we use the REINFORCE estimator $\hat{G}^{\text{RE}}_\Phi(\phi)$ for the output unit, in Section~\ref{extended} we show how to improve upon this if we have access to the reward function.

HNCA assigns credit to a particular output choice $\phi$ based on the relative counterfactual probability of its children's outputs had $\phi$ been chosen, independent of the actual value of $\Phi$. Intuitively, this reduces variance, because each potential output choice of a given unit will get credit proportional to the difference it makes further downstream. On the other hand, REINFORCE credits whatever output happens to be selected, whether it makes a difference or not. This intuition is formalized in the following theorem: 
\begin{theorem}\label{reduced_variance}
$\Var(\hat{G}^{\text{HNCA}}_{\Phi})\leq \Var(\hat{G}^{\text{RE}}_{\Phi})$, where $\Var(\vec{X})$ stand for the elementwise variance of random vector $\vec{X}$, and the inequality holds elementwise.
\end{theorem}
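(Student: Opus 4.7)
The plan is to prove the theorem by a Rao--Blackwellization argument: show that $\hat{G}^{\text{HNCA}}_{\Phi}$ is exactly the conditional expectation of $\hat{G}^{\text{RE}}_{\Phi}$ given the Markov blanket $\mb(\Phi)$ together with the reward $R$, and then invoke the law of total variance componentwise. Since the theorem is about elementwise variance of vector-valued estimators, I would state once at the outset that, because variance, conditional expectation, and the law of total variance all apply coordinate-by-coordinate, it suffices to argue for a single (arbitrary) component of the parameter gradient.

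The first step is to evaluate $\E\bigl[\hat{G}^{\text{RE}}_{\Phi}\,\big|\,\mb(\Phi),R\bigr]$ explicitly. Given $\mb(\Phi)$, both $\pa(\Phi)$ and (by the Markov blanket property together with the fact that $R$ is conditioned on) all quantities outside $\Phi$ that appear in the REINFORCE estimator are fixed; only $\Phi$ varies, with distribution $\P(\Phi=\phi|\mb(\Phi))$. Therefore
\begin{align*}
\E\bigl[\hat{G}^{\text{RE}}_{\Phi}\,\big|\,\mb(\Phi),R\bigr]
&= R\sum_{\phi} \P(\Phi=\phi|\mb(\Phi))\,\frac{\partial\log\pi_{\Phi}(\phi|\pa(\Phi))}{\partial\theta_\Phi}.
\end{align*}
Substituting the Markov-blanket identity (Equation~\ref{markov_blanket_conditional_expression}), the $\pi_\Phi(\phi|\pa(\Phi))$ factor cancels the $1/\pi_\Phi(\phi|\pa(\Phi))$ produced by rewriting the log-derivative, leaving precisely
\begin{align*}
\E\bigl[\hat{G}^{\text{RE}}_{\Phi}\,\big|\,\mb(\Phi),R\bigr]
&= \sum_{\phi} \rho_\Phi(\phi)\,\frac{\partial\pi_\Phi(\phi|\pa(\Phi))}{\partial\theta_\Phi}\,R
= \hat{G}^{\text{HNCA}}_{\Phi}.
\end{align*}
This is essentially the same manipulation performed in the derivation of Equation~\ref{HNCA_estimator}, just stopped one step earlier (before taking the outer expectation).

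The second step applies the law of total variance to each coordinate:
\begin{align*}
\Var\bigl(\hat{G}^{\text{RE}}_{\Phi}\bigr)
&= \E\bigl[\Var\bigl(\hat{G}^{\text{RE}}_{\Phi}\,\big|\,\mb(\Phi),R\bigr)\bigr]
+ \Var\bigl(\E\bigl[\hat{G}^{\text{RE}}_{\Phi}\,\big|\,\mb(\Phi),R\bigr]\bigr)\\
&= \E\bigl[\Var\bigl(\hat{G}^{\text{RE}}_{\Phi}\,\big|\,\mb(\Phi),R\bigr)\bigr]
+ \Var\bigl(\hat{G}^{\text{HNCA}}_{\Phi}\bigr).
\end{align*}
Since the first term is nonnegative elementwise, the stated inequality follows immediately.

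I do not expect any substantive obstacle; the argument is essentially Rao--Blackwell, and the only non-routine work has already been done in establishing Equation~\ref{markov_blanket_conditional_expression}. The one point that warrants explicit mention in the write-up is why conditioning on $R$ in addition to $\mb(\Phi)$ does not change the conditional distribution of $\Phi$: this is exactly the Markov blanket property (as used in step $(c)$ of the earlier derivation), which ensures $\Phi \perp R \mid \mb(\Phi)$, so that the factor $R$ can be pulled out of the conditional expectation without altering the distribution over $\Phi$ used to compute the sum over $\phi$.
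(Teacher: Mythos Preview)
Your proposal is correct and is essentially the same argument as the paper's: identify $\hat{G}^{\text{HNCA}}_{\Phi}=\E\bigl[\hat{G}^{\text{RE}}_{\Phi}\mid\mb(\Phi),R\bigr]$ and apply the law of total variance elementwise. If anything, your write-up is more explicit than the paper's (which simply cites the earlier derivation for the conditional-expectation identity), and your remark about why conditioning on $R$ does not alter the distribution of $\Phi$ given $\mb(\Phi)$ is a welcome clarification.
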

Theorem~\ref{reduced_variance} follows from the law of total variance by the proof available in Appendix~\ref{HNCA_action_value_low_var}.

\subsection{Efficient Implementation of HNCA}
We implement HNCA as a message-passing procedure. A forward pass propagates information from parents to children to compute the network output. A backward pass passes information from children to parents to compute the HNCA gradient estimator. The computational complexity of this procedure depends on how difficult it is to compute the numerators of $\rho_\Phi(\phi)$. We could naively recompute $\pi_C(C|\pa(C)\setminus\Phi,\Phi=\phi)$ from scratch for each possible $\phi$. When $C$ corresponds to a Bernoulli unit, which computes its output probability as a linear function of its inputs followed by sigmoid activation, this would require time $\bigO(|\pa(C)|N_{\Phi})$, where $N_{\Phi}$ is the number of possible values $\Phi$ can take (2 if $\Phi$ is also Bernoulli). To do this for every parent of every unit in a Bernoulli network would thus require $\bigO(2\sum_\Phi|\pa(\Phi)|^2)$. This is much greater than the cost of a forward pass, which takes on the order of the total number of edges in the network, or $\bigO(\sum_\Phi|\pa(\Phi)|)$. This contrasts with backpropagation where the cost of the backward pass is on the same order as the forward pass, an appealing property, which implies that learning is not a bottleneck.
\begin{figure}[htb]
\begin{algorithm}[H]
\begin{algorithmic}[1]
\STATE Receive $\vec{x}$ from parents\\
\STATE $l=\vec{\theta}\cdot\vec{x}+b$\\
\STATE $p=\sigma(l)$\\
\STATE $\phi\sim \textit{Bernoulli}(p)$\\
\STATE Pass $\phi$ to children\\
\STATE Receive $\vec{q}_1,\vec{q}_0,R$ from children\\
\STATE $q_1=\prod_{i}\vec{q}_1[i]$\\
\STATE $q_0=\prod_{i}\vec{q}_0[i]$\\
\STATE $\bar{q}=pq_1+(1-p)q_o$\\
\STATE $\vec{l}_1=l+\vec{\theta}\odot(1-\vec{x})$\\
\STATE $\vec{l}_0=l-\vec{\theta}\odot\vec{x}$\\
\STATE $\vec{p}_1=(1-\phi)(1-\sigma(\vec{l}_1))+\phi\sigma(\vec{l}_1)$\\
\STATE $\vec{p}_0=(1-\phi)(1-\sigma(\vec{l}_0))+\phi\sigma(\vec{l}_0)$\\
\STATE Pass $\vec{p}_1,\vec{p}_0,R$ to parents\\
\STATE $\vec{\theta}=\vec{\theta}+\alpha\sigma^{\prime}(l)\vec{x}\left(\frac{q_1-q_0}{\bar{q}}\right)R$\\
\STATE $b=b+\alpha\sigma^{\prime}(l)\left(\frac{q_1-q_0}{\bar{q}}\right)R$
\end{algorithmic}
\caption{HNCA (Bernoulli unit)}\label{bernoulli_alg}
\end{algorithm}
\addtocounter{algorithm}{-1}
\captionof{algorithm}{The forward pass in lines 1-5 takes input from the parents, uses it to compute the fire probability $p$ and samples $\phi\in\{0,1\}$. The backward pass receives two vectors of probabilities $\vec{q}_1$ and $\vec{q}_0$, each with one element for each child. Each element represents $\vec{q}_{0/1}[i]=\P\left(C_i\middle|\pa(C_i)\setminus\Phi,\Phi=0/1\right)$ for a given child $C_i\in \ch(\Phi)$. Lines 7 and 8 take the product of child probabilities to compute $\prod_i\pi_{C_i}(C_i|\pa(C_i)\setminus\Phi,\Phi=0/1)$. Line 9 computes the associated normalizing factor. Line 10-13 use the logit $l$ to efficiently compute a vector of probabilities $\vec{p}_1$ and $\vec{p}_0$. Each element corresponds to a counterfactual probability of $\phi$ if a given parent's value was fixed to 1 or 0. Here $\odot$ represents the elementwise product. Line 14 passes information to the unit's children. Lines 15 and 16 finally update the parameter using $\hat{G}^{\text{HNCA}}_{\Phi}$ with learning-rate hyperparameter $\alpha$.}
\end{figure}

Luckily, we can improve on this for cases where $\pi_C(C|\pa(C)\setminus\Phi,\Phi=\phi)$ can be computed from $\pi_C(C|\pa(C))$ in less time than computing $\pi_C(C|\pa(C))$ from scratch. This is indeed the case for linear Bernoulli units, for which the policy can be written $\pi_\Phi(\phi|\vec{x})=\sigma(\vec{\theta}\cdot\vec{x}+b)$ where $\vec{x}$ is the binary vector consisting of all parent outputs, $b$ is a scalar bias, $\vec{\theta}$ is the parameter vector for the unit, and $\sigma$ is the sigmoid function. Say we wish to compute the counterfactual probability of $\Phi=1$ given $\vec{x}[i]=1$, if we already have $\pi_\Phi(1|\vec{x})$. Regardless of the actual value of $\vec{x}_i$ we can use the following identity:
\begin{equation*}
    \pi_\Phi(1|\vec{x}\setminus \vec{x}[i],\vec{x}[i]=1)=\sigma(\sigma^{-1}(\pi_\Phi(1|\vec{x}))+\vec{\theta}[i](1-\vec{x}[i])).
\end{equation*}
 
This requires only constant time, whereas computing $\pi_\Phi(\phi|\vec{x})$ requires time proportional to the length of $\vec{x}$. This simple idea is crucial for implementing HNCA efficiently. In this case, we can compute the numerator terms for every unit in a Bernoulli network in $\bigO(\sum_\Phi|\pa(\Phi)|)$ time. This is now on the same order as computing a forward pass through the network. Computing $\hat{G}_{\Phi}^{\text{HNCA}}$ for a given $\Phi$ from these numerator terms requires multiplying a scalar by a gradient vector with the same size as $\theta_\Phi$. For a Bernoulli unit, $\theta_\Phi$ has $\bigO(|\pa(\Phi)|)$ elements, so this operation adds another $\bigO(\sum_\Phi|\pa(\Phi)|)$, maintaining the same order of complexity.


Algorithm~\ref{bernoulli_alg} shows an efficient implementation of HNCA for Bernoulli units. Note that, for ease of illustration, the pseudocode is implemented for a single unit and a single training example at a time. In practice, we use a vectorized version which works with vectors of units that constitute a layer, and with minibatches of training data. 


In Section~\ref{bandit_experiments}, we will apply HNCA to a model consisting of a number of hidden layers of Bernoulli units followed by a softmax output layer. Appendix~\ref{softmax_alg} provides an implementation and discussion of HNCA for a softmax output unit. Note that the output unit itself uses the REINFORCE estimator in its update, as it has no children, which precludes the use of HNCA. Nonetheless, the output unit still needs to provide information to its parents, which do use HNCA. Using a softmax unit at the output, we can still maintain the property that the time required for the backward pass is on the same order as the time required for the forward pass. If, on the other hand, the entire network consisted of softmax nodes with $N$ choices each, the HNCA backward pass would require a factor of $N$ more computation than the forward pass, we discuss this in Appendix~\ref{softmax_alg} as well.

\subsection{Contextual Bandit Experiments}
\label{bandit_experiments}
\begin{figure}[htb]
    \includegraphics[width=\columnwidth]{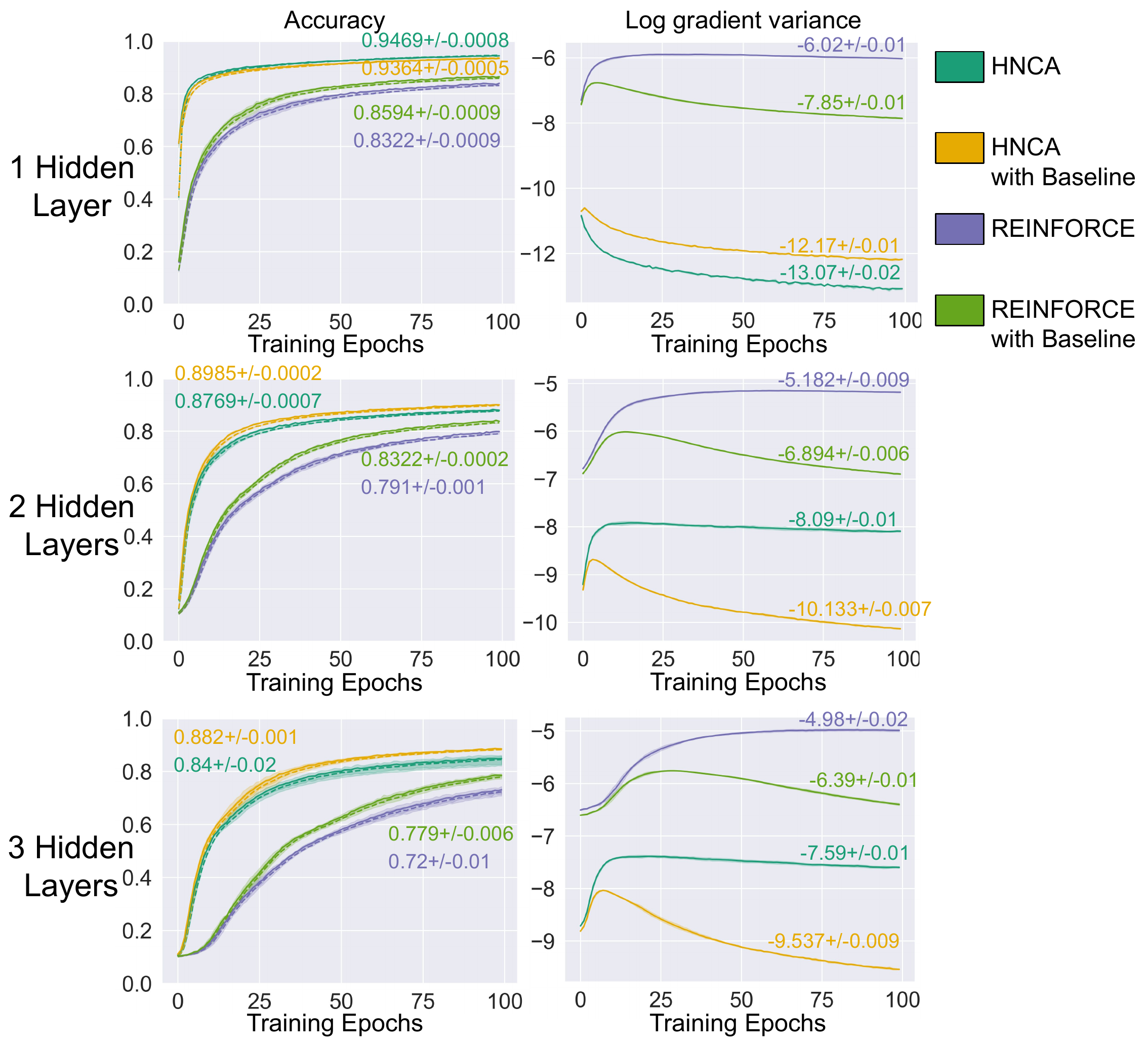}
    \caption{Training stochastic networks on a contextual bandit version of MNIST. Each line represents the average of 5 random seeds with error bars showing $95\%$ confidence interval. Final values (train accuracy for the left plots) at the end of training are written beside each line. The left column shows the online training accuracy (or equivalently the average reward) as a dotted line, and the test accuracy as a solid line (though they essentially overlap). The right column shows the natural logarithm of the mean gradient variance. Mean gradient variance is computed as the mean of the per-parameter empirical variance over examples in a training batch of $50$. We find that, for each network depth, HNCA drastically reduces gradient variance, resulting in significantly improved performance on this task.}
    \label{fig:contextual_bandit_plots}
\end{figure}
We evaluate HNCA against REINFORCE in terms of gradient variance and performance on a contextual bandit version of MNIST~\citep{lecun2010mnist}, with the standard train test split. Following~\citet{dong2020disarm}, input pixels are dynamically binarized, meaning that at each epoch they are randomly fixed to $0$ or $1$ with probability proportional to their intensity. For each training example, the model outputs a prediction and receives a reward of $1$ if correct and $0$ otherwise. We use a fully connected, feedforward network with 1, 2 or 3 hidden layers, each with 200 Bernoulli units, followed by a softmax output layer. We train using ADAM optimizer~\citep{kingma2014adam} with a learning rate fixed to $10^{-4}$ and batch-size of $50$ for $100$ epochs. Learning rate and layer size hyperparameters follow~\citet{dong2020disarm} for simplicity. We map the output of the Bernoulli units to one or negative one, instead of one or zero, as we found this greatly improved performance in preliminary experiments. We report results for HNCA and REINFORCE, both with and without an exponential moving average baseline subtracted from the reward. We use a discount rate of $0.99$ for the moving average.

Figure~\ref{fig:contextual_bandit_plots} shows the results, in terms of performance and gradient variance, for gradient estimates generated by HNCA and REINFORCE. We find that HNCA provides drastic improvement in terms of both gradient variance and performance over REINFORCE. Note that performance degrades with number of layers for both estimators, reflecting the increasing challenge of credit assignment. Subtracting a moving average baseline generally improves performance of both algorithms, except for HNCA in the single hidden layer case. The comparison between the two algorithms is qualitatively similar whether or not a baseline is used.


In Appendix~\ref{nonlinear_bandit}, we demonstrate that HNCA can also be used to efficiently train a stochastic layer as the final hidden layer of an otherwise deterministic network, this could be useful, for example, for learning a binary representation.

\section{Optimizing a Known Function}\label{extended}
In Section~\ref{bandit}, we introduced HNCA in a setting where the reward function was unknown, and dependent only on the input context and the output of the network as a whole. Here, we extend HNCA to optimize the expectation of a known function $f$, which may have direct dependence on every unit. We refer to this extension simply as $f$-HNCA. This setting is more in line with the setting explored by~\citet{titsias2015local}, and $f$-HNCA is distinguished from LEG mainly by its computationally efficient message passing implementation, which in turn facilitates its application to multi-layer stochastic networks.

We assume the function $f=\sum_if^i$ is factored into a number of \textit{function components} $f^i$, which we index by $i$ for convenience. This factored structure has two benefits, the first is computational. In particular, it will allow us to compute counterfactual values for each component with respect to changes to its input separately. The second is for variance reduction by realizing that we only need to assign credit to function components that lie downstream of the unit being credited. A similar variance reduction approach is also used by the NVIL algorithm of~\citet{mnih2014neural}. 

Each function component $f^i$ is a deterministic function of a subset of the outputs of units in the network, as well as possibly depending directly on some parameters. Thus, $f^i=f^i(\widetilde{\pa}(f^i);\theta^i)$, where $\theta^i$ is a set of real valued parameters which may overlap with the parameters $\theta_\Phi$ for some subset of units in the network, and $\widetilde{\pa}(f^i)$ is the set of nodes in the network which act as input to $f^i$. Formally, $f^i$ without arguments will refer to the random variable corresponding to the output of the associated function. We use the notation $\widetilde{\pa}$, distinct from $\pa$, to make it clear that function components are not considered nodes in the network.

The goal in this setting is to estimate the gradient of $\E[f]$, so that we can maximize it by gradient ascent. By linearity of expectation, we can define unbiased estimators for each $\frac{\partial\E[f^i]}{\partial\theta_\Phi}$ separately and sum over $i$ to get an unbiased estimator of the full gradient. 

\subsection{HNCA with a Known Function}\label{GHNCA}
We now discuss how to extend the HNCA estimator to construct an estimator of $\frac{\partial\E[f]}{\partial\theta_\Phi}$ for a particular unit $\Phi$ and function component in this setting. We begin by considering the gradient for a single function component $\frac{\partial\E[f^i]}{\partial\theta_\Phi}$.  First, note that we can break the gradient into indirect and direct dependence on $\theta_\Phi$:
\begin{equation}\label{gradient_decomp}
\frac{\partial\E[f^i]}{\partial\theta_\Phi}=\E\left[\frac{\partial\log(\pi_\Phi(\Phi|\pa(\Phi)))}{\partial\theta_\Phi}f^i\right]+\E\left[\frac{\partial f^i}{\partial\theta_\Phi}\right].
\end{equation}
The direct gradient $\frac{\partial f^i}{\partial\theta_\Phi}$ is zero unless $\theta_\Phi\in\theta^i$, in which case it can be computed directly given we assume access to $f^i$. From this point on, we will focus on the left expectation.

The main added complexity in estimating $\E\left[\frac{\partial\log(\pi_\Phi(\Phi|\pa(\Phi)))}{\partial\theta_\Phi}f^i\right]$, compared to the contextual bandit case, arises if $f^i$ has a direct functional dependence on $\Phi$. In this case we can no longer assume that $f^i$ is separated from $\Phi$ by $\mb(\Phi)$. Luckily, this is straightforward to patch. Let $f^i_\Phi(\phi)$ be the random variable defined by taking the function $f^i(\widetilde{\pa}(f^i);\theta^i)$ and substituting the specific value $\phi$ instead of the random variable $\Phi$ into the arguments while keeping all other $\widetilde{\pa}(f^i)$ equal to the associated random variables. By design, $f^i_\Phi(\phi)$ is independent of $\Phi$ given $\mb(\Phi)$, which allows us to define the following unbiased estimator for $\E\left[\frac{\partial\log(\pi_\Phi(\Phi|\pa(\Phi)))}{\partial\theta_\Phi}f^i\right]$ (see Appendix~\ref{Generalized_HNCA_derivation} for the full derivation):
\begin{equation}\label{Generalized_HNCA_i}
    \hat{G}^{\text{$f$-HNCA},i}_\Phi(\phi)\defeq\sum_\phi\rho_\Phi(\phi)\frac{\partial\pi_\Phi(\Phi|\pa(\Phi))}{\partial\theta_\Phi}f^i_\Phi(\phi),
\end{equation}
where $\rho_\Phi(\phi)$ is as in Equation~\ref{markov_blanket_conditional_expression}. As $\rho_\Phi(\phi)$ is defined with respect to $\ch(\Phi)$, this estimator is only applicable if $\Phi$ has children (i.e. $\ch(\Phi)\neq\emptyset$). In fact, even if $\Phi$ has children, we can ignore them if they have no downstream connection\footnote{More generally, if only a subset of $\ch(\Phi)$ lies in $\widetilde{\an}(f^i)$  we can replace $\ch(\Phi)$ in $\rho_\Phi(\phi)$ with $\ch^i(\Phi)=(\ch(\Phi)\cap\widetilde{\an}(f^i))$, but, we will not use this in our experiments in this work.} to $f^i$, as such children cannot influence $f^i$. Thus if $\ch(\Phi)\cap\widetilde{\an}(f^i)=\emptyset$ we instead define $\hat{G}^{\text{$f$-HNCA},i}_\Phi(\phi)\defeq\sum_\phi\frac{\partial\pi_\Phi(\Phi|\pa(\Phi))}{\partial\theta_\Phi}f^i_\Phi(\phi)$. In Appendix~\ref{Generalized_HNCA_low_variance}, we extend Theorem~\ref{reduced_variance} to apply to $f$-HNCA, showing that using $\hat{G}_{\Phi}^{\text{$f$-HNCA},i}$ results in a variance reduced estimator for $\E\left[\frac{\partial\log(\pi_\Phi(\Phi|\pa(\Phi)))}{\partial\theta_\Phi}f^i\right]$ compared to REINFORCE. The full $f$-HNCA gradient estimator is defined by summing up these components and accounting for any direct functional dependence of $f$ on network parameters:
\begin{equation}\label{generalized_HNCA}
\hat{G}_{\Phi}^{\text{$f$-HNCA}}\defeq\begin{multlined}[t]\sum_\phi\frac{\partial\pi_\Phi(\phi|\pa(\Phi))}{\partial\theta_\Phi}\Biggl(\rho_\Phi(\phi)\smashoperator{\sum_{i:\ch(\Phi)\cap\widetilde{\an}(f^i)\neq\emptyset}}f^i_\Phi(\phi)\hspace{15pt}+\\
\smashoperator{\sum_{i:\ch(\Phi)\cap\widetilde{\an}(f^i)=\emptyset}}f^i_\Phi(\phi)\hspace{10pt}\Biggr)+\sum_i\frac{\partial f^i}{\partial\theta_\Phi}.
\end{multlined}
\end{equation}


If $\Phi\not\in\widetilde{\an}(f^i)$ then $\frac{\partial\E[f^i]}{\partial\theta_\Phi}=\E\left[\frac{\partial f^i}{\partial\theta_\Phi}\right]$ as $\Phi$ cannot influence something with no downstream connection. Hence, in the two leftmost sums over $i$ in Equation~\ref{generalized_HNCA}, we implicitly only sum over $i$ such that $\Phi\in\widetilde{\an}(f^i)$. 

In addition to the efficiency of computing counterfactual probabilities, for $f$-HNCA, we have to consider the efficiency of computing counterfactual function components $f^i_\Phi(\phi)$ given $f^i$. For function components with no direct connection to a unit $\Phi$, this is trivial as $f^i_\Phi(\phi)=f^i$. If $f^i$ is directly connected, then implementing $f$-HNCA with efficiency similar to HNCA will require that we are able to compute $f^i_\Phi(\phi)$ from $f^i$ in constant time. This is the case if $f^i$ is a linear function followed by some activation. For example, functions of the form $f^i=\log(\sigma(\vec{\theta}\cdot\vec{x}+b))$ which will appear in the ELBO function used in our variational auto-encoder (VAE; \citet{kingma2013auto,rezende2014stochastic}) experiments. More algorithmic details can be found in Appendix~\ref{VAE_implementation}.

\subsection{Variational Auto-encoder Experiment}\label{VAE_experiments}
\definecolor{purple}{HTML}{7570b3}
\definecolor{orange}{HTML}{d95f02}
\definecolor{pink}{HTML}{e7298a}
\definecolor{green}{HTML}{66a61e}
\begin{figure}[!t]
    \centering
    \includegraphics[width=\columnwidth]{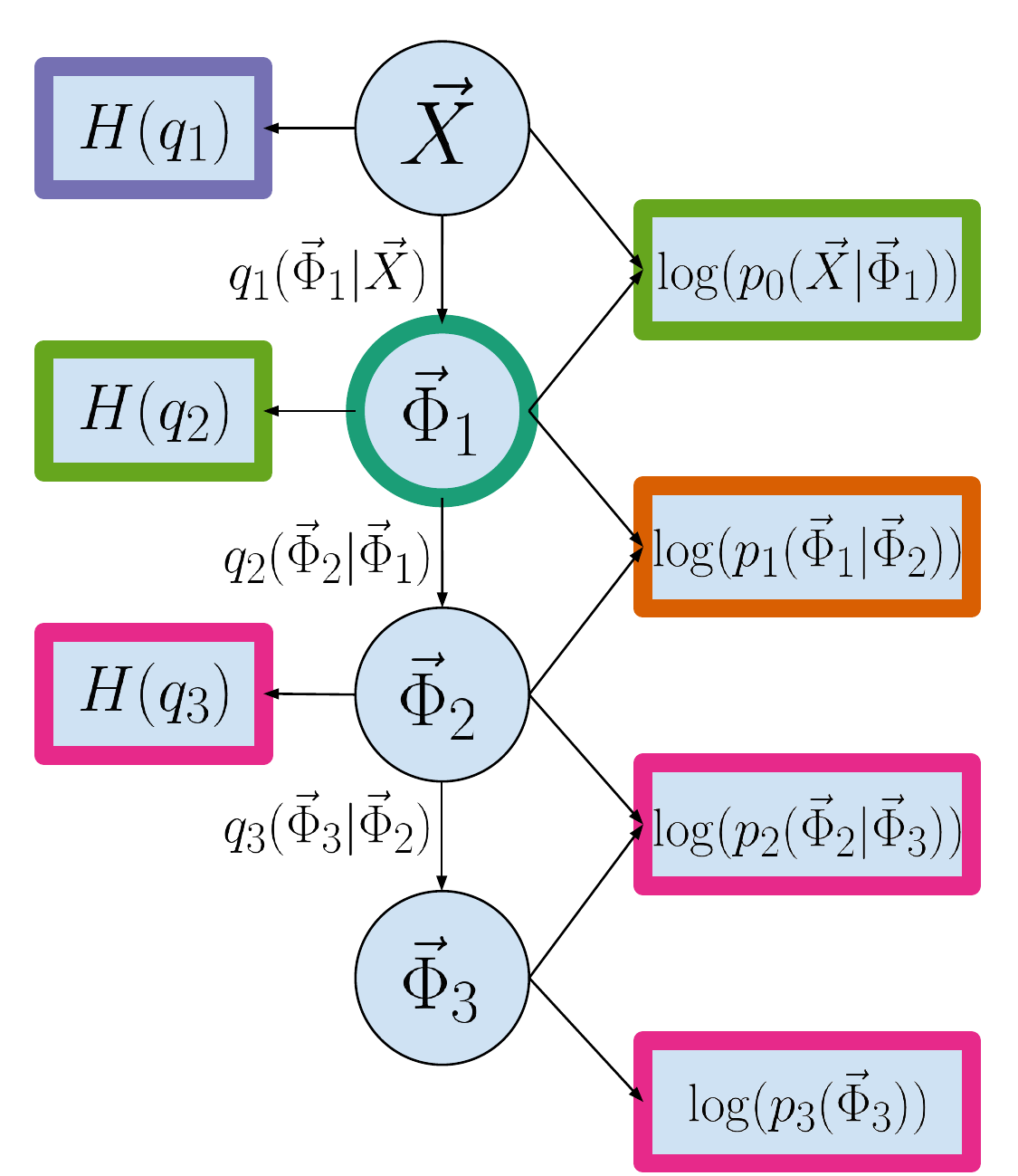}
    \caption{An illustration of the ELBO for a 3 layer discrete hierarchical VAE broken down into function components for $f$-HNCA. $\vec{X}$ is the input to be encoded, each additional circle is the latent state from a layer of the encoder network. Each rectangle is a set of function components which contribute to the ELBO. The parameters of the encoder are trained to maximize the ELBO by $f$-HNCA. Consider the $f$-HNCA gradient estimator for $\vec{\Phi}_1$. The function components $H(q_1)$, marked in \textcolor{purple}{purple} are upstream of $\vec{\Phi}_1$, however, $H(q_1)$ depends directly on $\theta_{q_1}$ and thus $\frac{\partial H(q_1)}{\partial \theta_{q_1}}$ is nonzero, so the entire contribution of $H(q_1)$ to the gradient estimate $\hat{G}^{\text{$f$-HNCA}}_\Phi$ will come from this gradient. The function components marked in \textcolor{green}{green} have only direct connection with $\vec{\Phi}_1$, so they will receive credit via $\hat{G}^{\text{$f$-HNCA},i}_\Phi(\phi)\defeq\sum_\phi\frac{\partial\pi_\Phi(\Phi|\pa(\Phi))}{\partial\theta_\Phi}f^i_\Phi(\phi)$. The function components marked in \textcolor{orange}{orange} have both direct connections and downstream connections mediated by $\vec{\Phi}_2$, so they will receive credit via Equation~\ref{Generalized_HNCA_i}. Finally, the variables marked in \textcolor{pink}{pink} have only mediated connections to $\vec{\Phi}_1$ through $\vec{\Phi}_2$, so $f^i_\Phi(\phi)=f^i$, the estimator for these variables essentially reduces to the original HNCA estimator defined in Equation~\ref{HNCA_estimator}.}
    \label{fig:generative_model}
\end{figure}
Here, we demonstrate how the $f$-HNCA approach described in Section~\ref{GHNCA} can be applied to the challenging task of training a discrete hierarchical VAE. Consider a VAE consisting of a generative model (decoder) $p$ and an approximate posterior (encoder) $q$, each of which consist of $L$ discrete stochastic layers. Samples $\vec{X}$ are generated by $p$ as
\begin{equation*}
    \vec{X}\sim p_0(\vec{X}|\vec{\Phi}_1), \vec{\Phi}_1\sim p_1(\vec{\Phi}_1|\vec{\Phi}_2), ..., \vec{\Phi}_L\sim p_L(\vec{\Phi}_L),
\end{equation*}
while $q$ approximates the posterior $\P(\vec{\Phi}_L|X)$ as a distribution which can be sampled as
\begin{equation*}
    \Phi_L\sim \begin{multlined}[t]q_L(\vec{\Phi}_L|\vec{\Phi}_{L-1}), \vec{\Phi}_{L-1}\sim q_{L-1}(\vec{\Phi}_{L-1}|\vec{\Phi}_{L-2}),...\\
    ,\vec{\Phi}_1\sim q_1(\vec{\Phi}_1|\vec{X}),
    \end{multlined}
\end{equation*}
where, each $p_i$ and $q_i$ represents a vector of Bernoulli distributions, each parameterized as a linear function of their input (except the prior $p_L(\vec{\Phi}_L)$ which takes no input, and is simply a vector of Bernoulli variables with learned means). Call the associated parameters $\theta_{p_i}$ and $\theta_{q_i}$ We can train such a VAE by maximizing a lower bound on the log-likelihood of the training data, usually referred to as the evidence lower bound (ELBO) which we can write as $\E[f_E]$ where
\begin{equation}\label{ELBO_reward}
\begin{multlined}[b]
f_E\defeq\log(p_0(\vec{X}|\vec{\Phi}_1))+\sum_{l=1}^{L-1}\log(p_l(\vec{\Phi}_l|\vec{\Phi}_{l-1}))+\\\log(p_L(\vec{\Phi}_L))+H(q_1(\cdot|\vec{X}))+\sum_{l=1}^{L-1}H(q_{l+1}(\cdot|\vec{\Phi}_l))
\end{multlined},
\end{equation}
where $H$ is the entropy of the distribution, and the expectation is taken with respect to the encoder $q$ and random samples $\vec{X}$. Each $\vec{\Phi}_i$ is sampled from the associated encoder $q_i$. Note that each term in Equation~\ref{ELBO_reward} is a sum over elements in the associated output vector, we can view each element as a particular function component $f^i$. The resulting compute graph is illustrated in Figure~\ref{fig:generative_model}. 


We compare $f$-HNCA with REINFORCE and several stronger methods for optimizing an ELBO of a VAE trained to generate MNIST digits. We focus on strong, unbiased, variance reduction techniques from the literature that do not require modifying the architecture or introduce significant additional hyperparameters. Since HNCA falls into this category, this allows for straightforward comparison without the additional nuance of architectural and hyperparameter choices. Specifically, we compare HNCA with REINFORCE leave one out (REINFORCE LOO; \citet{kool2019buy}) and DisARM~\citep{dong2020disarm}. Note that in the multi-layer case, both DisARM and REINFORCE LOO require sampling an additional partial forward pass beginning from each layer, which gives them a quadratic scaling in compute cost with the number of layers. By contrast, HNCA requires only a single forward pass and a backward pass of similar complexity.


Initially, we found that $f$-HNCA outperformed the other tested methods in the single layer discrete VAE case, but fell short in the multi-layer case. However, we found that a simple modification that subtracts a layer specific scalar baseline, similar to that used by~\citet{mnih2014neural}, significantly improved the performance of $f$-HNCA in the multi-layer case. Specifically, for each layer, we maintain a scalar running average of the sum of those components of f with mediated connections (those highlighted in pink and orange in Figure 2) and subtract it from the leftmost sum over i in Equation 6 to produce a centered learning signal.\footnote{Using such a baseline for components without mediated connections would analytically cancel.} We use a discount rate of $0.99$ for the moving average.\footnote{We used the first value we tried, we did not tune it.} We refer to this variant as $f$-HNCA with Baseline. We also tested subtracting a moving average of all downstream function components in REINFORCE to understand how much this change helps on its own. It's not obvious how to implement such a running average baseline for the other tested methods given they already utilize alternative methods to center the learning signal, thus a naive moving average baseline would have expectation zero.

As in Section~\ref{bandit_experiments}, we use dynamic binarization and train using ADAM optimizer with learning rate $10^{-4}$ and batch-size $50$. Following~\citet{dong2020disarm}, our decoder and encoder each consist of a fully connected, stochastic feedforward neural network with 1, 2 or 3 layers, each hidden layer has 200 Bernoulli units. We train for $840$ epochs, approximately equivalent to the $10^6$ updates used by~\citet{dong2020disarm}. For consistency with prior work, we use Bernoulli units with a zero-one output. For all methods, we train each unit based on downstream function components, as opposed to using the full function $f$. See Appendix~\ref{VAE_exp_details} for more implementation details.


Figure~\ref{fig:generative_model_plots}, shows the results in terms of ELBO and gradient variance, for gradient estimates generated by $f$-HNCA and the other methods tested. As in the contextual bandit case, we find that $f$-HNCA provides drastic improvement over REINFORCE. $f$-HNCA also provides a significant improvement over all other methods for the single-layer discrete VAE, but underperforms the other strong methods in the multi-layer case. On the other hand, $f$-HNCA with Baseline significantly improves on the other tested methods in all cases. REINFORCE with baseline outperforms ordinary $f$-HNCA in the multi-layer cases. Hence, this baseline subtraction is a fairly powerful variance reduction technique for REINFORCE, with strong complementary benefits with $f$-HNCA. In Appendix~\ref{test_set_ELBOs}, we additionally report multi-sample test-set ELBOs for the final trained networks, which reflect the same performance ordering as the training set ELBOs. In Appendix~\ref{ablation}, we perform an ablation experiments on $f$-HNCA with Baseline and find that the choice of whether to exclude children when $\ch(\Phi)\cap\widetilde{\an}(f^i)=\emptyset$ has a significant performance impact, while the additional impact of excluding upstream function components is fairly minimal.

\begin{figure}[htb]
    \includegraphics[width=\columnwidth]{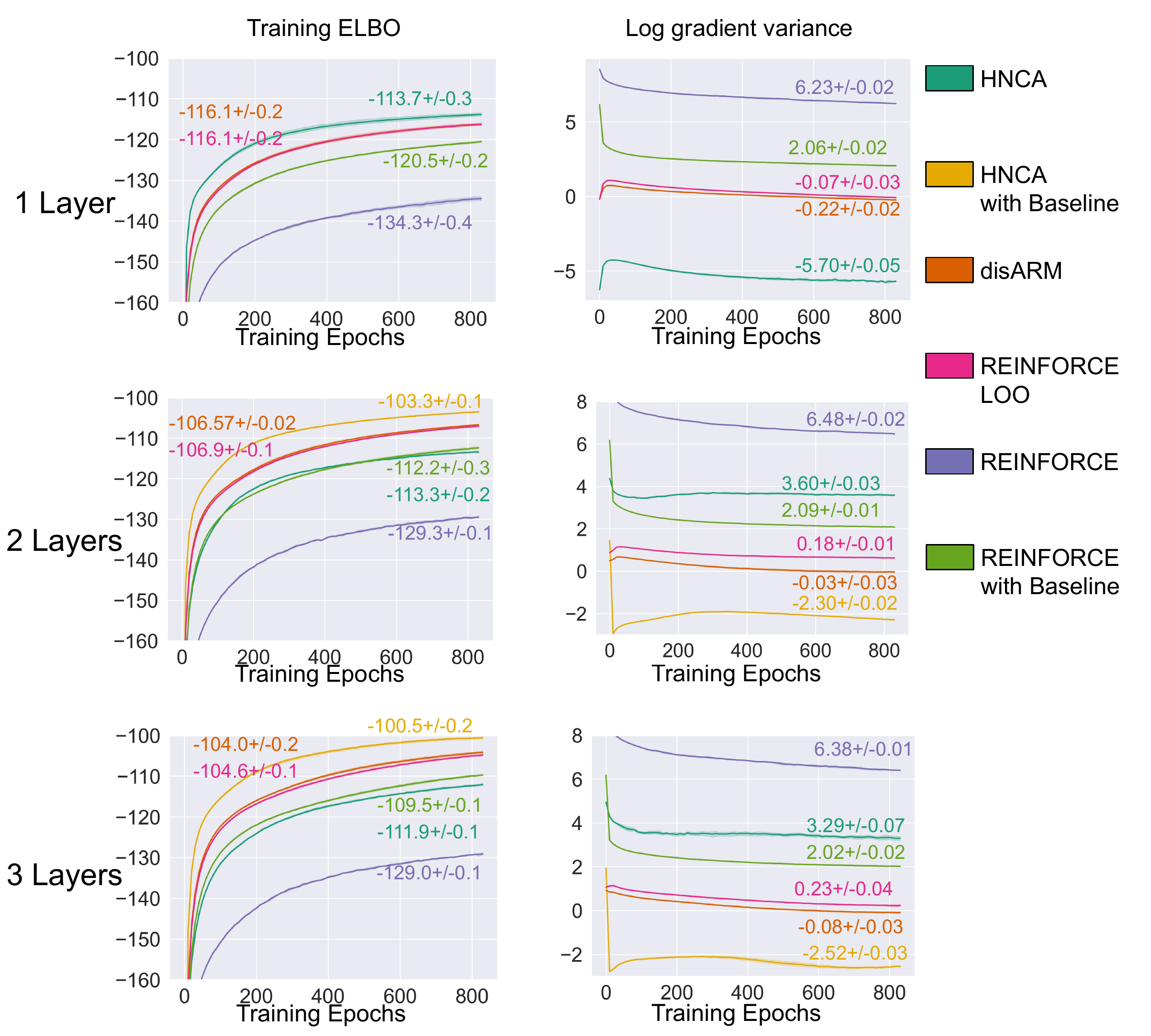}
        \caption{Training discrete VAEs to generate MNIST digits. Each line represents the average of 5 random seeds with error bars showing $95\%$ confidence interval. Final values at the end of training are written near each line in matching color. The left column shows the online training ELBO. The right column shows the natural logarithm of the mean encoder gradient variance. Mean gradient variance is computed as the mean over parameters and batches of the per-parameter empirical variance over examples in a training batch of $50$. $f$-HNCA outperforms all other tested methods in the single-layer case, but underperforms in the multi-layer cases. $f$-HNCA with Baseline outperforms the other methods in the multi-layer case. $f$-HNCA with baseline is excluded from the single layer results as there are no mediated connections.}
    \label{fig:generative_model_plots}
\end{figure}

\section{Discussion and Conclusion}
We introduced HNCA, an algorithm for gradient estimation in networks of discrete stochastic units. HNCA is inspired by Hindsight Credit Assignment~\citep{harutyunyan2019hindsight}, and can be seen as an instance of Local Expectation Gradients, extending the work of~\citet{titsias2015local} by providing a computationally efficient message passing algorithm and extension to multi-layer networks of stochastic units. Our computational efficient approach directly addresses concerns in the literature that LEG is inherently computationally expensive~\citep{tucker2017rebar,mnih16vimco}. We prove that HNCA is unbiased, and that it reduces variance compared to REINFORCE. Empirically, we show that HNCA outperforms strong methods for training a single-layer Bernoulli VAE, and when subtracting a simple moving average baseline also outperforms the same methods for the case of a multi-layer Hierarchical VAE.

It's worth highlighting that efficient implementation of HNCA is predicated on the ability to efficiently compute counterfactual probabilities or function components when a single input is changed. This is not always possible, for example, if $f$ is the result of a multi-layer deterministic network. An example of this situation is the nonlinear discrete VAE architecture explored by \citet{dong2020disarm} and \citet{yin2019arm} where the encoder and decoder are nonlinear networks with a single stochastic Bernoulli layer at the outputs. However, as we show in Appendix~\ref{nonlinear_bandit}, HNCA can be used to train a final Bernoulli hidden layer at the end of a nonlinear network.

In addition to optimizing a known function of the output of a stochastic network, we show in Section~\ref{bandit} that HNCA can be applied to train the hidden layers of a multi-layer discrete network in an online learning setting with unknown reward function. REINFORCE LOO and DisARM, which rely on the ability to evaluate the reward function multiple times for a single training example, cannot.

Future work could explore combining HNCA with other methods for complimentary benefits. One could also explore extending HNCA to propagate credit multiple steps which would presumably allow further variance reduction, but presents challenges as the relationships between more distant nodes in the network becomes increasingly complex.


HNCA provides insight into the challenges of credit assignment in discrete stochastic compute graphs, which has the potential to have an impact on future approaches. 

\section*{Acknowledgments}
The author thanks Rich Sutton, Matt Taylor and Tian Tian for useful conversations, and anonymous reviewers for useful feedback. I also thank the Natural Sciences and Engineering Research Council of Canada and Alberta Innovates for providing funding for this work.
\bibliography{refs}

\newpage
\appendix
\onecolumn
\begin{center}{\Huge \textbf{Appendix}}\end{center}
\section{The Local REINFORCE Estimator is Unbiased}\label{local_REIN_unbiased}
Here, we show that the local REINFORCE estimator $\hat{G}^{\text{RE}}_{\Phi}=\frac{\partial\log(\pi_\Phi(\Phi|\pa(\Phi)))}{\partial\theta_\Phi}R$ is an unbiased estimator of the gradient of the expected reward with respect to $\theta_\Phi$.
\begin{align*}
     &\E[\hat{G}^{\text{RE}}_{\Phi}]=\E\left[\frac{\partial\log(\pi_\Phi(\Phi|\pa(\Phi)))}{\partial\theta_\Phi}R\right]\\
     &\stackrel{(a)}{=}\sum_{b}\P(\pa(\Phi)=b)\sum_\phi\pi_\Phi(\phi|b)\frac{\partial\log(\pi_\Phi(\phi|b))}{\partial\theta_\Phi}\E\left[R\middle|\pa(\Phi)=b,\Phi=\phi\right]\\
     &\stackrel{(b)}{=}\sum_{b}\P(\pa(\Phi)=b)\sum_\phi\frac{\partial\pi_\Phi(\phi|b)}{\partial\theta_\Phi}\E\left[R\middle|\pa(\Phi)=b,\Phi=\phi\right]\\
     &\stackrel{(c)}{=}
     \frac{\partial}{\partial\theta_\Phi}\sum_{b}\P(\pa(\Phi)=b)\sum_\phi\pi_\Phi(\phi|b)\E\left[R\middle|\pa(\Phi)=b,\Phi=\phi\right]\\
     &=\frac{\partial\E[R]}{\partial\theta_\Phi},
\end{align*}

where $(a)$ expands the expectation over $\pa(\Phi)$ and $\Phi$, $(b)$ rewrites the log gradient, and $(c)$ follows from the fact that the probability of the parents of $\Phi$, $\P(\pa(\Phi)=b))$, does not depend on the parameters $\theta_\Phi$ controlling $\Phi$ itself, nor does the expected reward conditioned on $\Phi$ and $\pa(\Phi)$. 

\section{Derivation of Expression for Conditional Probability of Unit Output Conditioned on a Markov Blanket}\label{markov_blanket_conditional}
Here, we prove Equation~\ref{markov_blanket_conditional_expression}, that is
\begin{equation*}
    \P(\Phi=\phi|\mb(\Phi))=\frac{\pi_\Phi(\phi|\pa(\Phi))\prod\limits_{C\in\ch(\Phi)}\pi_C(C|\pa(C)\setminus\Phi,\Phi=\phi)}{\sum_{\phi'}\pi_\Phi(\phi'|\pa(\Phi))\prod\limits_{C\in\ch(\Phi)}\pi_C(C|\pa(C)\setminus\Phi,\Phi=\phi')},
\end{equation*}
which is used in deriving the HNCA gradient estimator. In doing so, we will use Theorem 1 from Section 4 of~\citet{pearl1988probabilistic}, restated here in our notation for convenience:
\begin{theorem}[Theorem 1~\citep{pearl1988probabilistic}]\label{markov_blanket_theorem}
Let $X$ be a random variable in a Bayesian network. Let $\neg X$ represent the set of all random variables in the network besides $X$. Then:
\begin{equation*}
    \P(X=x|\neg X)=\alpha\P(X=x|\pa(X))\prod_{C\in\ch(X)}\P(C|\pa(C)\setminus X,X=x),
\end{equation*}
where $\alpha$ is a normalizing factor which does not depend on $x$.
\end{theorem}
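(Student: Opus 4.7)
The plan is to derive the factorization directly from the defining chain-rule factorization of a Bayesian network, using Bayes' rule to convert the conditional on $\neg X$ into a ratio of joint probabilities and then identifying which factors depend on the value $x$.

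First, I would write $\P(X=x\mid\neg X) = \P(X=x,\neg X)/\P(\neg X)$. Since $\P(\neg X)$ depends only on the realized values of variables other than $X$, and not on the variable $x$ we are conditioning through, it can be absorbed into a constant $\alpha$ (viewed as a function of $x$ only; it is just the reciprocal of $\sum_{x'}\P(X=x',\neg X)$ with $\neg X$ fixed). So the task reduces to showing that the joint $\P(X=x,\neg X)$ factorizes, as a function of $x$, as $\P(X=x\mid\pa(X))\prod_{C\in\ch(X)}\P(C\mid\pa(C)\setminus X, X=x)$ up to a factor independent of $x$.

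Next, I would invoke the standard Bayesian network factorization of the joint distribution over all variables: $\P(X=x,\neg X) = \prod_Y \P(Y\mid\pa(Y))$, where the product ranges over every variable $Y$ in the network (with $Y=X$ instantiated to $x$ and the other $Y$'s set to their values in $\neg X$). I would then partition this product into three groups: (i) the factor $\P(X=x\mid\pa(X))$ for $Y=X$; (ii) factors $\P(C\mid\pa(C))$ for $Y=C$ a child of $X$, each of which can be written as $\P(C\mid\pa(C)\setminus X, X=x)$ since $X\in\pa(C)$; and (iii) all remaining factors $\P(Y\mid\pa(Y))$ for $Y$ neither $X$ nor a child of $X$. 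The crucial observation is that in group (iii), $X\notin\pa(Y)$ (by definition of $\ch(X)$), so each such factor is determined entirely by values in $\neg X$ and is independent of $x$. Hence the product of group (iii) is a constant in $x$ and can be absorbed into $\alpha$.

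Combining these observations gives the desired expression. The main subtlety, rather than a real obstacle, is being careful about exactly which factors remain $x$-dependent; once one notes that $X$ appears as an argument only in its own conditional and in those of its children, the factorization follows immediately from the Bayesian network chain rule. The normalization $\alpha$ is then fixed by requiring $\sum_{x'}\P(X=x'\mid\neg X)=1$, which also makes it manifest that $\alpha$ depends only on $\neg X$ and not on $x$.
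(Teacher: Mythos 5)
Your proof is correct. Note, however, that the paper does not prove this statement at all: it is quoted verbatim (in the paper's notation) from Pearl's 1988 textbook and used as an imported result in Appendix B, so there is no in-paper proof to compare against. Your derivation is the standard one: apply Bayes' rule, invoke the Bayesian network chain-rule factorization $\P(X=x,\neg X)=\prod_Y\P(Y\mid\pa(Y))$, and observe that the only factors depending on $x$ are the one for $X$ itself and those for its children (where $X$ appears as a parent), with everything else absorbed into the normalizer $\alpha$. The key step --- identifying that $x$ appears only in its own conditional and in its children's conditionals --- is stated explicitly and correctly, and the normalization argument showing $\alpha$ is independent of $x$ is sound. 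This is exactly the argument one finds in Pearl, so your proposal fills in a proof the paper deliberately omits rather than diverging from one it contains.
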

Using this theorem, we can compute $\P(\Phi=\phi|\mb(\Phi))$ as follows:
\begin{align*}
    \P(\Phi=\phi|\mb(\Phi))&\stackrel{(a)}{=}\P(\Phi=\phi|\neg\Phi)\\
    &\stackrel{(b)}{=}\alpha\P(\Phi=\phi|\pa(\Phi))\prod_{C\in\ch(\Phi)}\P(C|\pa(C)\setminus\Phi,\Phi=\phi)\\
    &\stackrel{(c)}{=}\frac{\P(\Phi=\phi|\pa(\Phi))\prod\limits_{C\in\ch(\Phi)}\P(C|\pa(C)\setminus\Phi,\Phi=\phi)}{\sum\limits_{\phi'}\P(\Phi=\phi'|\pa(\Phi))\prod\limits_{C\in\ch(\Phi)}\P(C|\pa(C)\setminus\Phi,\Phi=\phi')}\\
    &\stackrel{(d)}{=}\frac{\pi_\Phi(\phi|\pa(\Phi))\prod\limits_{C\in\ch(\Phi)}\pi_C(C|\pa(C)\setminus\Phi,\Phi=\phi)}{\sum_{\phi'}\pi_\Phi(\phi'|\pa(\Phi))\prod\limits_{C\in\ch(\Phi)}\pi_C(C|\pa(C)\setminus\Phi,\Phi=\phi')},\\
\end{align*}
where $(a)$ follows from the fact that $\mb(\Phi)=\{\ch(\Phi),\pa(\Phi),\pa(\ch(\Phi))\setminus\Phi\}$ is a \textit{minimal} Markov blanket for $\Phi$ and hence $\Phi$ is independent of all other variables in the network given $\mb(\Phi)$, $(b)$ follows from Theorem~\ref{markov_blanket_theorem}, $(c)$ simply makes the normalizing factor $\alpha$ explicit and $(d)$ uses the fact that $\P(\Phi=\phi|\pa(\Phi))=\pi_\Phi(\phi|\pa(\Phi))$.

\section{The HNCA Gradient Estimator has Lower Variance than the REINFORCE Estimator}\label{HNCA_action_value_low_var}
Here, we provide the proof of Theorem~\ref{reduced_variance}.
\begingroup
\def\thetheorem{\ref{reduced_variance}}
\begin{theorem}
Recall that
\begin{equation*}
\hat{G}^{\text{RE}}_{\Phi}\defeq\frac{\partial\log(\pi_\Phi(\Phi|\pa(\Phi)))}{\partial\theta_\Phi}R
\end{equation*}
and 
\begin{equation*}
 \hat{G}^{\text{HNCA}}_{\Phi}=\sum_\phi\frac{\prod\limits_{C\in\ch(\Phi)}\pi_C(C|\pa(C)\setminus\Phi,\Phi=\phi)}{\sum\limits_{\phi'}\pi_\Phi(\phi'|\pa(\Phi))\prod\limits_{C\in\ch(\Phi)}\pi_C(C|\pa(C)\setminus\Phi,\Phi=\phi')}\frac{\partial\pi_\Phi(\phi|\pa(\Phi))}{\partial\theta_\Phi}R,
\end{equation*}
then $\Var(\hat{G}^{\text{HNCA}}_{\Phi})\leq \Var(\hat{G}^{\text{RE}}_{\Phi})$, where $\Var(\vec{X})$ stand for the elementwise variance of random vector $\vec{X}$, and the inequality holds elementwise.
\end{theorem}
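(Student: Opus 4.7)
The plan is to recognize that $\hat{G}^{\text{HNCA}}_{\Phi}$ is exactly the conditional expectation of $\hat{G}^{\text{RE}}_{\Phi}$ given the Markov blanket $\mb(\Phi)$ together with the reward $R$, and then apply the law of total variance componentwise.

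First I would establish the identity
\begin{equation*}
\hat{G}^{\text{HNCA}}_{\Phi} \;=\; \E\!\left[\hat{G}^{\text{RE}}_{\Phi}\,\middle|\,\mb(\Phi),R\right].
\end{equation*}
To see this, I would proceed exactly as in steps $(b)$--$(d)$ of the derivation of the HNCA estimator in the main text, but stopping one level earlier: since $\mb(\Phi)$ is a Markov blanket for $\Phi$, conditioning on $\mb(\Phi)$ makes $\Phi$ independent of $R$, so $R$ can be pulled out of the inner expectation. Expanding the resulting inner expectation over $\Phi$, rewriting the log-gradient as $\frac{1}{\pi_\Phi}\frac{\partial \pi_\Phi}{\partial\theta_\Phi}$, and substituting the Markov-blanket conditional formula $\P(\Phi=\phi|\mb(\Phi))=\rho_\Phi(\phi)\pi_\Phi(\phi|\pa(\Phi))$ from Equation~\ref{markov_blanket_conditional_expression}, gives precisely $\hat{G}^{\text{HNCA}}_{\Phi}$.

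Next I would apply the law of total variance elementwise. Treating each coordinate $j$ of the parameter vector $\theta_\Phi$ separately, write
\begin{equation*}
\Var\!\left((\hat{G}^{\text{RE}}_{\Phi})_j\right) \;=\; \E\!\left[\Var\!\left((\hat{G}^{\text{RE}}_{\Phi})_j \,\middle|\, \mb(\Phi), R\right)\right] \;+\; \Var\!\left(\E\!\left[(\hat{G}^{\text{RE}}_{\Phi})_j \,\middle|\, \mb(\Phi), R\right]\right).
\end{equation*}
By the identity above, the second term on the right equals $\Var\!\left((\hat{G}^{\text{HNCA}}_{\Phi})_j\right)$, and the first term is a nonnegative expectation of a variance. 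Rearranging yields the claimed elementwise inequality $\Var(\hat{G}^{\text{HNCA}}_{\Phi}) \leq \Var(\hat{G}^{\text{RE}}_{\Phi})$.

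The only subtle step is the justification for conditioning on both $\mb(\Phi)$ and $R$ simultaneously, as opposed to just $\mb(\Phi)$: I would stress that the Markov blanket property guarantees $\Phi \perp R \mid \mb(\Phi)$, so the additional conditioning on $R$ does not change the inner conditional expectation of $\frac{\partial \log \pi_\Phi}{\partial \theta_\Phi}$, but it does allow $R$ to be treated as a constant and multiplied outside. No calculations beyond this are required; the argument is a clean variance-reduction-by-Rao--Blackwellization statement once the identity is in hand.
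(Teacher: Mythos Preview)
Your proposal is correct and matches the paper's own proof essentially line for line: the paper likewise identifies $\hat{G}^{\text{HNCA}}_{\Phi}=\E[\hat{G}^{\text{RE}}_{\Phi}\mid\mb(\Phi),R]$ from the derivation in Section~\ref{bandit} and then applies the law of total variance elementwise to conclude. Your added remark about why conditioning on $R$ in addition to $\mb(\Phi)$ is harmless is a nice clarification but not a departure from the paper's argument.
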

\addtocounter{theorem}{-1}
\endgroup
\begin{proof}
The proof follows from applying the law of total variance elementwise. From the derivation in Section~\ref{bandit} we know that
\begin{equation*}
    \hat{G}^{\text{HNCA}}_{\Phi}=\E\left[\hat{G}^{\text{RE}}_{\Phi}\middle|\mb(\Phi),R\right].
\end{equation*}
Now apply the law of total variance to rewrite the variance of the REINFORCE estimator as follows:
\begin{align*}
    \Var(\hat{G}^{\text{RE}}_{\Phi})&=\begin{multlined}[t]
    \E\left[\Var\left(\hat{G}^{\text{RE}}_{\Phi}\middle|\mb(\Phi),R\right)\right]+\Var\left(\E\left[\hat{G}^{\text{RE}}_{\Phi}\middle|\mb(\Phi),R\right]\right)
    \end{multlined}\\
    &\geq \Var\left(\E\left[\hat{G}^{\text{RE}}_{\Phi}\middle|\mb(\Phi),R\right]\right)\\
    &=\Var(\hat{G}^{\text{HNCA}}(\Phi)).
\end{align*}
\end{proof}

\section{HNCA for Softmax Output layer of Contextual Bandit Experiments}\label{softmax_alg}
For the $\Phi=A$, corresponding to the softmax output layer, computing a counterfactual probability $\pi_\Phi(\Phi|\pa(\Phi)\setminus B, B=b)$, will require $\bigO(N_A)$ time (where $N_A$ is the number of possible actions), instead of constant time. This can be seen by noting that we can easily compute the counterfactual logit corresponding to each action in constant time, but to compute the probability of any given action we must compute counterfactual logits for all actions. Hence, to compute counterfactual probabilities for each parent of the output unit will require $\bigO(NN_A|\pa(A)|)$, where again $N$ is number of possible outputs for each parent, assumed the same across parents. Note that this is again $N=2$ times the complexity of the forward pass if all the parents are Bernoulli units. Again, this can be reduced to $N-1=1$ by reusing the value computed in the forward pass.

Algorithm~\ref{fig:softmax_alg} provides an efficient pseudocode implementation for the softmax output unit used in our contextual bandit experiments. Note that the output unit itself uses the REINFORCE estimator in its update, as it has no children, which precludes the use of HNCA. Nonetheless, the output unit still needs to provide information to its parents, which do use HNCA.

If the entire network consisted of softmax units, each with $N$ output choices, we can see from the above discussion that computing all counterfactual probabilities for each parent would require $\bigO(N^2\sum_\Phi|\pa(\Phi)|)$. On the other hand, the forward pass in this case only requires $\bigO(N\sum_\Phi|\pa(\Phi)|)$. Hence, HNCA would add a factor of $N$ overhead in this case compared to the forward pass. However, it's worth noting that applying the biased straight-through estimator in the softmax case, as is done for example by~\citet{hafner2021mastering}, in principle suffers the same $N$ overhead for the backward pass. This is because while the forward pass simply needs to pass a single output for each node, the backward pass operates as if a size $N$ vector of probabilities had been passed, which blows up the input size by a factor of $N$.
\begin{figure}[htb]
\begin{algorithm}[H]
\begin{algorithmic}[1]
\STATE Receive $\vec{x}$ from parents\\
\STATE $\vec{l}=\Theta\vec{x}+\vec{b}$\\
\STATE $\vec{p}=\frac{\exp{\vec{l}}}{\sum_i\exp{\vec{l}[i]}}$\\
\STATE Output $\phi\sim \vec{p}$\\
\STATE Receive $R$ from environment\\
\FOR{all $i$}
\STATE $L_1[i]=\vec{l}+\Theta[i]\odot(1-\vec{x})$\\
\STATE $L_0[i]=\vec{l}-\Theta[i]\odot\vec{x}$
\ENDFOR
\STATE $\vec{p}_1=\frac{\exp{L_1[\phi]}}{\sum_i\exp{L_1[i]}}$\\
\STATE $\vec{p}_0=\frac{\exp{L_0[\phi]}}{\sum_i\exp{L_0[i]}}$\\
\STATE Pass $\vec{p}_1,\vec{p}_0,R$ to parents\\
\FOR{all $i$}
\STATE $\Theta[i]=\Theta[i]+\alpha\vec{x}(\ind(\phi=i)-\vec{p}[i])R$\\
\STATE $b[i]=b[i]+\alpha(\ind(\phi=i)-\vec{p}[i])R$
\ENDFOR
\end{algorithmic}
\captionsetup{labelformat=empty}
\caption{HNCA (Softmax output unit)}\label{fig:softmax_alg}
\end{algorithm}
\addtocounter{algorithm}{-1}
\captionof{algorithm}{Efficient implementation of HNCA message passing for a softmax output unit in a contextual bandit setting. Lines 1-4 implement the forward pass, in this case producing an integer $\phi$ corresponding to the possible actions. Lines 6-11 compute counterfactual probabilities of the given output class conditional on fixing the value of each parent. Note that $\Theta[i]$ refers to the $i_{th}$ row of the matrix $\Theta$. In this case, computing these counterfactual probabilities requires computation on the order of the number of parents, times the number of possible actions. Line 12 passes the necessary information back to the parents. Lines 13-16 update the parameters according to $\hat{G}^{\text{RE}}_{\Phi}$.}
\end{figure}

\section{HNCA to Train a Final Bernoulli Hidden Layer in a Nonlinear Network}\label{nonlinear_bandit}
\begin{figure}[htb]
    \centering
    \includegraphics[width=\columnwidth]{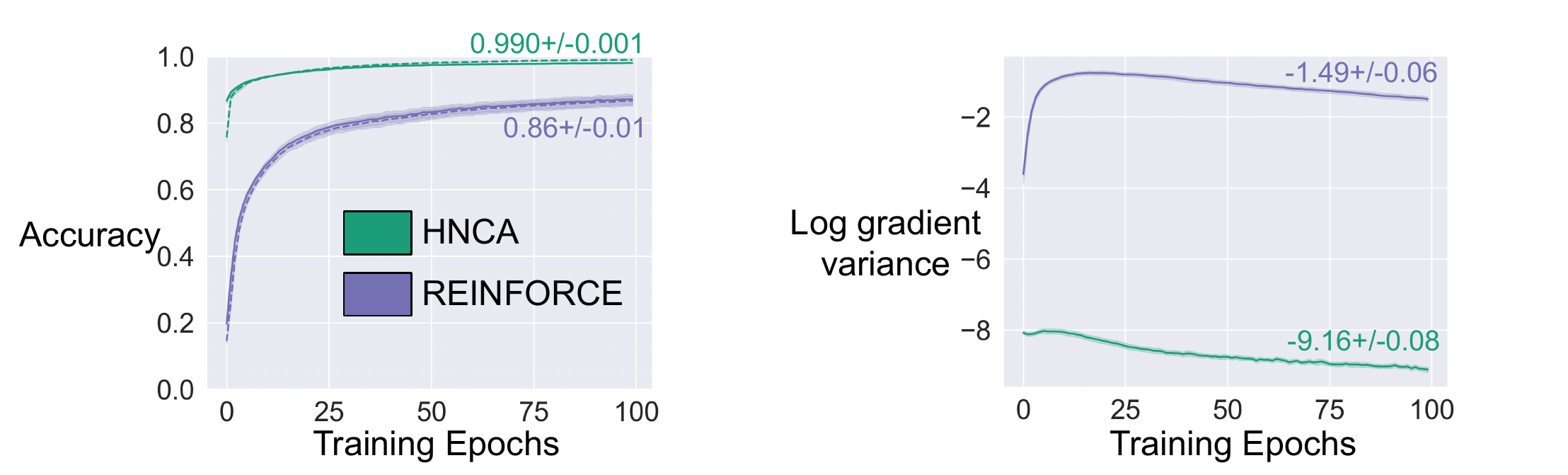}
    \caption{Training stochastic networks on a contextual bandit version of MNIST with two deterministic convolutional layer forming the input to a single Bernoulli hidden layer. Each line represents the average of 20 random seeds with error bars showing $95\%$ confidence interval. Final values at the end of training (train accuracy for the left plot) are written near each line in matching color. The top row shows the online training accuracy (or equivalently the average reward) as a dotted line, and the test accuracy as a solid line. The bottom row shows the natural logarithm of the mean gradient variance. Mean gradient variance is computed as the mean of the per-parameter empirical variance over examples in a training batch of $50$. HNCA significantly outperforms REINFORCE in this setting.}
    \label{fig:nonlinear_contextual_bandit_plots}
\end{figure}

Here, we provide a simple demonstration of using HNCA to train a Bernoulli layer as the last hidden layer of a nonlinear network. The task is the same contextual bandit version of MNIST outlined in Section~\ref{bandit_experiments}. The architecture consists of a two convolutional layers with 16 channels each, followed by ReLU activation which then feeds into a layer of 200 Bernoulli units, and finally a softmax output. To compute the HNCA estimator in this case we again use Equation~\ref{HNCA_estimator},  but now the gradients $\frac{\partial\pi_\Phi(\phi|X)}{\partial\theta_\Phi}$ are computed by backprop and summed over units when parameters are shared between them. More precisely, define $q_0^j$, $q_1^j$ and $\bar{q}^j$ as in Algorithm~\ref{bernoulli_alg} but with an additional index $j$ indicating the specific unit in the Bernoulli layer. The HNCA estimator can then efficiently implemented in an automatic differentiation framework by defining the following loss:
\begin{equation*}
    \mathcal{L}=-R\sum_j\text{SG}\left(\frac{q_1^j-q_0^j}{\bar{q}^j}\right)\pi_{\Phi^j}(\phi|X),
\end{equation*}
where in this case $\pi_{\Phi^j}(\phi|X)$ is the policy of unit $j$, and has a differentiable, nonlinear dependence on the context with arbitrary parameter sharing between units. SG stands for stop gradient, indicating that gradients are not propagated through $\frac{q_1^j-q_0^j}{\bar{q}^j}$. Computing the gradient of this loss function gives us the HNCA gradient estimator for this case, that is $\hat{G}^{HNCA}=\frac{\partial\mathcal{L}}{\partial\theta}$. The softmax output unit still implements Algorithm~\ref{fig:softmax_alg}.

We again compare against REINFORCE. As in Section~\ref{bandit_experiments} we map the output of the Bernoulli units to $-1$ or $1$. The results are shown in Figure~\ref{fig:nonlinear_contextual_bandit_plots} where we see that HNCA again provides a significant benefit over REINFORCE in this setting.

\newpage
\section{Derivation of $f$-HNCA Estimator}\label{Generalized_HNCA_derivation}
In this section, we elaborate on how to derive the $f$-HNCA gradient estimator $\hat{G}^{\text{$f$-HNCA},i}_\Phi(\phi)$. Recall that we defined $f^i_\Phi(\phi)$ as the random variable defined by taking the function $f^i(\widetilde{\pa}(f^i);\theta^i)$ and substituting the specific value $\phi$ instead of the random variable $\Phi$ into the arguments while keeping all other $\widetilde{\pa}(f^i)$ equal to the associated random variables. With this definition, we can express $\E\left[\frac{\partial\log(\pi_\Phi(\Phi|\pa(\Phi)))}{\partial\theta_\Phi}f^i\right]$ as follows:
\begin{align*}
    \E\left[\frac{\partial\log(\pi_\Phi(\Phi|\pa(\Phi)))}{\partial\theta_\Phi}f^i\right]&\stackrel{(a)}{=}\E\left[\E\left[\frac{\partial\log(\pi_\Phi(\Phi|\pa(\Phi)))}{\partial\theta_\Phi}f^i\middle|\mb(\Phi),\widetilde{\pa}(f^i)\setminus\Phi\right]\right]\\
    &\stackrel{(b)}{=}\E\left[\E\left[\sum_{\phi}\ind(\Phi=\phi)\frac{\partial\log(\pi_\Phi(\phi|\pa(\Phi)))}{\partial\theta_\Phi}f^i_\Phi(\phi)\middle|\mb(\Phi),\widetilde{\pa}(f^i)\setminus\Phi\right]\right]\\
    &\stackrel{(c)}{=}\E\left[\sum_{\phi}\E\left[\ind(\Phi=\phi)\middle|\mb(\Phi),\widetilde{\pa}(f^i)\setminus\Phi\right]\frac{\partial\log(\pi_\Phi(\phi|\pa(\Phi)))}{\partial\theta_\Phi}f^i_\Phi(\phi)\right]\\
    &\stackrel{(d)}{=}\E\left[\sum_{\phi}\E\left[\ind(\Phi=\phi)\middle|\mb(\Phi)\right]\frac{\partial\log(\pi_\Phi(\phi|\pa(\Phi)))}{\partial\theta_\Phi}f^i_\Phi(\phi)\right]\\
    &=\E\left[\sum_{\phi}\P(\Phi=\phi|\mb(\Phi))\frac{\partial\log(\pi_\Phi(\phi|\pa(\Phi)))}{\partial\theta_\Phi}f^i_\Phi(\phi)\right]\numberthis\label{GHNCA_derivation_1},
\end{align*}
where $(a)$ applies the law of total expectation, $(b)$ follows because the indicator function is zero except where the summand equals the expression from the previous line, $(c)$ moves deterministic quantities out of the inner expectation, and $(d)$ exploits the fact that $\Phi$ is independent of $\widetilde{\pa}(f^i)\setminus\Phi$ given $\mb(\Phi)$. From here, as in Section~\ref{bandit}, we substitute Equation~\ref{markov_blanket_conditional_expression} into the expression within the expectation to get the following unbiased estimator for $\E\left[\frac{\partial\log(\pi_\Phi(\Phi|\pa(\Phi)))}{\partial\theta_\Phi}f^i\right]$:
\begin{equation*}
\hat{G}^{\text{$f$-HNCA},i}_\Phi(\phi)\defeq\sum_\phi\rho_\Phi(\phi)\frac{\partial\pi_\Phi(\Phi|\pa(\Phi))}{\partial\theta_\Phi}f^i_\Phi(\phi),
\end{equation*}
where $\rho_\Phi(\phi)=\frac{\prod\limits_{C\in\ch(\Phi)}\pi_C(C|\pa(C)\setminus\Phi,\Phi=\phi)}{\sum\limits_{\phi'}\pi_\Phi(\phi'|\pa(\Phi))\prod\limits_{C\in\ch(\Phi)}\pi_C(C|\pa(C)\setminus\Phi,\Phi=\phi')}$. In the case where $\ch(\Phi)\cap\widetilde{\an}(f^i)=\emptyset$ it's not necessary to propagate credit from the children, $\ch(\Phi)$, as they cannot influence the reward. In this case, we instead use a simpler estimator derived as follows:
\begin{align*}
    \E\left[\frac{\partial\log(\pi_\Phi(\Phi|\pa(\Phi)))}{\partial\theta_\Phi}f^i\right]&\stackrel{(a)}{=}\E\left[\E\left[\frac{\partial\log(\pi_\Phi(\Phi|\pa(\Phi)))}{\partial\theta_\Phi}f^i\middle|\pa(\Phi),\widetilde{\pa}(f^i)\setminus\Phi\right]\right]\\
    &\stackrel{(b)}{=}\E\left[\E\left[\sum_{\phi}\ind(\Phi=\phi)\frac{\partial\log(\pi_\Phi(\phi|\pa(\Phi)))}{\partial\theta_\Phi}f^i_\Phi(\phi)\middle|\pa(\Phi),\widetilde{\pa}(f^i)\setminus\Phi\right]\right]\\
    &\stackrel{(c)}{=}\E\left[\sum_{\phi}\E\left[\ind(\Phi=\phi)\middle|\pa(\Phi)\right]\frac{\partial\log(\pi_\Phi(\phi|\pa(\Phi)))}{\partial\theta_\Phi}f^i_\Phi(\phi)\right]\\
    &=\E\left[\sum_{\phi}\pi_\Phi(\phi|\pa(\Phi))\frac{\partial\log(\pi_\Phi(\phi|\pa(\Phi)))}{\partial\theta_\Phi}f^i_\Phi(\phi)\right]\\
    &=\E\left[\sum_{\phi}\frac{\partial\pi_\Phi(\phi|\pa(\Phi))}{\partial\theta_\Phi}f^i_\Phi(\phi)\right]\numberthis\label{GHNCA_derivation_2},
\end{align*}
where $(a)$ applies the law of total expectation, $(b)$ follows because the indicator function is zero except where the summand equals the expression from the previous line, $(c)$ exploits the fact that $\Phi$ is independent of $\widetilde{\pa}(f^i)\setminus\Phi$ given $\pa(\Phi)$ due to the assumption $\ch(\Phi)\cap\widetilde{\an}(f^i)=\emptyset$. The final expression within the expectation gives us the unbiased estimator

\begin{equation*}
\hat{G}^{\text{$f$-HNCA},i}_\Phi(\phi)\defeq\sum_\phi\frac{\partial\pi_\Phi(\phi|\pa(\Phi))}{\partial\theta_\Phi}f^i_\Phi(\phi).
\end{equation*}
In our experiments we only distinguish the cases where $\ch(\Phi)\cap\widetilde{\an}(f^i)=\emptyset$ and $\ch(\Phi)\cap\widetilde{\an}(f^i)\neq\emptyset$. However, as alluded to in Section~\ref{GHNCA}, if only a subset of $\ch(\Phi)$ lies in $\widetilde{\an}(f^i)$  we can replace $\ch(\Phi)$ in $\rho_\Phi(\phi)$ with $\ch^i(\Phi)=(\ch(\Phi)\cap\widetilde{\an}(f^i))$. To see that this is the case, it suffices to note that if a particular child $C$ has no downstream connections to a particular function component $f^i$, then $\frac{\partial\E[f^i]}{\partial\theta_\Phi}$ must be the same in a new network with the connection from $\Phi$ to $C$ severed as in the original network.

\section{Efficient Implementation of $f$-HNCA}\label{VAE_implementation}
\begin{figure}[htb]
\begin{algorithm}[H]
\begin{algorithmic}[1]
\STATE Receive $\vec{x}$ from parents\\
\STATE $l=\vec{\theta}\cdot\vec{x}+b$\\
\STATE $f=\eta(l)$\\
\STATE $\vec{l}_1=l+\vec{\theta}\odot(1-\vec{x})$\\
\STATE $\vec{l}_0=l-\vec{\theta}\odot\vec{x}$\\
\STATE $\vec{f}_1=\eta(\vec{l}_1)$\\
\STATE $\vec{f}_0=\eta(\vec{l}_0)$\\
\STATE Pass $\vec{f}_1,\vec{f}_0$, $f$ to parents\\
\end{algorithmic}
\captionsetup{labelformat=empty}
\caption{$f$-HNCA algorithm for Linear Function Components}\label{reward_var_alg}
\end{algorithm}
\addtocounter{algorithm}{-1}
\captionof{algorithm}{Efficient implementation of $f$-HNCA for a function component which consists of a linear function of its inputs followed by an arbitrary activation $\eta$. Inputs are assumed to be Bernoulli. The forward pass in lines 1-3 takes input from the parents and uses it to compute the function component $R$. Line 4-7 use the logit $l$ to efficiently compute a vector of counterfactual function components $\vec{f}_1$ and $\vec{f}_0$ where each element corresponds to a counterfactual function component obtained if all else was the same but a given parent's value was fixed to 1 or 0. Here $\odot$ represents the elementwise product. Line 8 passes the necessary information to the unit's children.}
\end{figure}

\begin{figure}
\begin{algorithm}[H]
\begin{algorithmic}[1]
\STATE Receive $\vec{x}$ from parents\\
\STATE $l=\vec{\theta}\cdot\vec{x}+b$\\
\STATE $p=\sigma(l)$\\
\STATE $\phi\sim \textit{Bernoulli}(p)$\\
\STATE Pass $\phi$ to children\\
\STATE Receive $\vec{q}_1,\vec{q}_0$ from child units\\
\STATE Receive $\vec{f}_0^d,\vec{f}_1^d$ from child function components with only direct connections\\
\STATE Receive $\vec{f}_0^c,\vec{f}_1^c$ from child function components which are also connected through children\\
\STATE Receive $G$, sum of downstream non-child function components
\STATE $f_0^c=\sum_i\vec{f}_0^c[i]$
\STATE $f_1^c=\sum_i\vec{f}_1^c[i]$
\STATE $f_0^d=\sum_i\vec{f}_0^d[i]$
\STATE $f_1^d=\sum_i\vec{f}_1^d[i]$
\STATE $q_1=\prod_{i}\vec{q}_1[i]$\\
\STATE $q_0=\prod_{i}\vec{q}_0[i]$\\
\STATE $\bar{q}=pq_1+(1-p)q_o$\\
\STATE $\vec{l}_1=l+\vec{\theta}\odot(1-\vec{x})$\\
\STATE $\vec{l}_0=l-\vec{\theta}\odot\vec{x}$\\
\STATE $\vec{p}_1=(1-\phi)(1-\sigma(\vec{l}_1))+\phi\sigma(\vec{l}_1)$\\
\STATE $\vec{p}_0=(1-\phi)(1-\sigma(\vec{l}_0))+\phi\sigma(\vec{l}_0)$\\
\STATE Pass $\vec{p}_1,\vec{p}_0$ to parents\\
\STATE $\vec{\theta}=\vec{\theta}+\alpha\sigma^{\prime}(l)\vec{x}\left(\frac{q_1f_1^c-q_0f_0^c}{\bar{q}}+\frac{q_1-q_0}{\bar{q}}G+f_1^d-f_0^d\right)$\\
\STATE $b=b+\alpha\sigma^{\prime}(l)\left(\frac{q_1f_1^c-q_0f_0^c}{\bar{q}}+\frac{q_1-q_0}{\bar{q}}G+f_1^d-f_0^d\right)$
\end{algorithmic}
\captionsetup{labelformat=empty}
\caption{$f$-HNCA algorithm for Bernoulli unit}\label{generalized_bernoulli_alg}
\end{algorithm}
\addtocounter{algorithm}{-1}
\captionof{algorithm}{Efficient implementation of $f$-HNCA for a Bernoulli unit, where function components are credited through all children, or none. We omit any direct dependence of function components on network parameters for conciseness. Lines 1-5 implement the forward pass, which takes input from the parents, uses it to compute the fire probability $p$ and samples $\phi\in\{0,1\}$. In the backward pass, the unit receives two vectors of probabilities $\vec{q}_1$ and $\vec{q}_0$, each with one element for each child unit of the current unit, as in Algorithm~\ref{bernoulli_alg}. The unit also receives vectors $\vec{f}^d_1$ and $\vec{f}^d_0$ containing counterfactual function components from function components with only direct connections. Likewise, $\vec{f}^c_1$ and $\vec{f}^c_0$ contain counterfactual function components from child function components with direct connections as well as additional connections mediated through children. Finally, G contains the cumulative sum of all function components which are downstream of the current unit but not directly connected. Lines 10-13 sum up the counterfactual function components. Lines 14 and 15 take the product of all child unit probabilities to compute $\prod_{C\in\ch(\Phi)}\pi_C(C|\pa(C)\setminus\Phi,\Phi=0/1)$. Line 16 computes the associated normalizing factor. Lines 17-20 use the already computed logit $l$ to efficiently compute a vector of probabilities $\vec{p}_1$ and $\vec{p}_0$ where each element corresponds to a counterfactual probability of $\phi$ if all else was the same but a given parent's value was fixed to 1 or 0. Here $\odot$ represents the elementwise product. Line 21 passes the necessary information to the unit's children. Lines 22 and 23 finally update the parameter using $\hat{G}^{\text{$f$-HNCA}}_{\Phi}$ with learning-rate hyperparameter $\alpha$.}
\end{figure}

In addition to the efficiency of computing counterfactual probabilities, for $f$-HNCA, we have to consider the efficiency of computing counterfactual function components $f^i_\Phi(\phi)$. For function components with no direct connection to a unit $\Phi$, this is trivial as $f^i_\Phi(\phi)=f^i$. If $f^i$ is directly connected, then implementing $f$-HNCA with efficiency similar to HNCA will require that we are able to compute $f^i_\Phi(\phi)$ from $f^i$ in constant time. This is the case, for example, if $f^i$ is a linear function followed by some activation. For example functions of the form $f^i=\log(\sigma(\vec{\theta}\cdot\vec{x}+b))$ which appear in the ELBO function used in our VAE experiments. Algorithm~\ref{reward_var_alg} presents pseudocode for efficiently computing counterfactual values for such function components, and passing them to connected units.

If only a subset of $\ch(\Phi)$ lies in $\widetilde{\an}(f^i)$ we could use $\ch^i(\Phi)=(\ch(\Phi)\cap\widetilde{\an}(f^i))$, or any superset, in place of $\ch(\Phi)$ in $\rho_\Phi(\phi)$. In this case, we would also have to consider the complexity of computing the HNCA estimator for each such $\ch^i(\Phi)$. In the worst case $\ch^i(\Phi)$ may be different for each $i$, meaning that $\rho_\Phi(\phi)$ may have to be separately computed for each $i$, requiring a product of up to $|\ch(\Phi)|$ numbers for each function component $f^i$. We leave open the question of how efficiently this can be done in general. For now, we focus on the case where either $\ch^i(\Phi)=\emptyset$ or $\ch^i(\Phi)=\ch(\phi)$. Focusing on this case allows us to rewrite the $f$-HNCA gradient estimator as follows:
\begin{equation*}
\begin{multlined}[t]
    \hat{G}_{\Phi}^{\text{$f$-HNCA}}=\sum_{\phi}\frac{\partial\pi_\Phi(\phi|\pa(\Phi))}{\partial\theta_\Phi}\Biggl(\rho_\Phi(\phi)\Biggl(\sum_{i:\ch^i(\Phi)\neq\emptyset,\Phi\in\widetilde{\pa}(f^i)}f^i_\Phi(\phi)+\\\sum_{i:\ch^i(\Phi)\neq\emptyset,\Phi\not\in\widetilde{\pa}(f^i)}f^i\Biggr)+\sum_{i:\ch^i(\Phi)=\emptyset,\Phi\in\widetilde{\pa}(f^i)}f^i_\Phi(\phi)\Biggr)+\sum_i\frac{\partial f^i}{\partial\theta_\Phi},
\end{multlined}
\end{equation*}
where $\rho_\Phi(\phi)=\frac{\prod\limits_{C\in\ch(\Phi)}\pi_C(C|\pa(C)\setminus\Phi,\Phi=\phi)}{\sum\limits_{\phi'}\pi_\Phi(\phi'|\pa(\Phi))\prod\limits_{C\in\ch(\Phi)}\pi_C(C|\pa(C)\setminus\Phi,\Phi=\phi')}$. Notice that we do not need to compute a different value of $\rho_\Phi(\phi)$ for each $f^i$, as we treat the dependence on children as either all or none. The three sums over function components from first to last handle: function components with both mediated and direct connection to $\Phi$, function components with only mediated connections to $\Phi$, and function components with only direct connections to $\Phi$.

Furthermore, if during the backward pass there are function components which we know have no direct connection to units further upstream, we can accumulate these in a sum and credit upstream units with the sum rather than separately computing the sum in each unit. This is analogous to accumulating the future return in reinforcement learning. 

Algorithm~\ref{generalized_bernoulli_alg} presents pseudocode for an efficient implementation of $f$-HNCA for a Bernoulli unit within a feedforward architecture where each function component is credited as being either downstream of every unit in the following layer or none.
\section{The Components of the $f$-HNCA Gradient Estimator have Lower Variance than the Associated Components of the REINFORCE Gradient Estimator}\label{Generalized_HNCA_low_variance}

Here, we verify that the components of the $f$-HNCA estimator with $\hat{G}^{\text{$f$-HNCA},i}_\Phi(\phi)$ have lower variance than the associated components of the analogous REINFORCE estimator. This is formalized in the following theorem:
\begin{theorem}
Let 
$$\hat{G}^{\text{$f$-HNCA},i}_\Phi(\phi)\defeq\begin{cases}
\sum_\phi\rho_\Phi(\phi)\frac{\partial\pi_\Phi(\phi|\pa(\Phi))}{\partial\theta_\Phi}f^i_\Phi(\phi) &\text{ if }\ch(\Phi)\cap\widetilde{\an}(f^i)\neq \emptyset\\
\sum_\phi\frac{\partial\pi_\Phi(\phi|\pa(\Phi))}{\partial\theta_\Phi}f^i_\Phi(\phi)&\text{ if }\ch(\Phi)\cap\widetilde{\an}(f^i)= \emptyset
\end{cases}
$$ 
where $\rho_\Phi(\phi)=\frac{\prod_{C\in\ch(\Phi)}\pi_C(C|\pa(C)\setminus\Phi,\Phi=\phi)}{\sum_{\phi'}\pi_\Phi(\phi'|\pa(\Phi))\prod_{C\in\ch(\Phi)}\pi_C(C|\pa(C)\setminus\Phi,\Phi=\phi')}$. Let $$\hat{G}_{\Phi}^{\text{RE},i}=\frac{\partial\log(\pi_\Phi(\Phi|\pa(\Phi)))}{\partial\theta_\Phi}f^i,$$
that is, the obvious generalization of REINFORCE to a specific function component. Then 
$$\Var(\hat{G}_{\Phi}^{\text{$f$-HNCA},i})\leq \Var(\hat{G}^{\text{RE},i}_{\Phi}).$$
\end{theorem}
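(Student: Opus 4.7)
The plan is to mimic the proof of Theorem~\ref{reduced_variance} and reduce the claim to a direct application of the law of total variance. The key observation to establish is that, for an appropriate choice of conditioning sigma-algebra, $\hat{G}_{\Phi}^{\text{$f$-HNCA},i}$ is precisely the conditional expectation of $\hat{G}_{\Phi}^{\text{RE},i}$. Once that identity is in hand, the law of total variance gives
$$\Var(\hat{G}_{\Phi}^{\text{RE},i}) = \E\bigl[\Var(\hat{G}_{\Phi}^{\text{RE},i}\mid\mathcal{F})\bigr] + \Var\bigl(\E[\hat{G}_{\Phi}^{\text{RE},i}\mid\mathcal{F}]\bigr) \geq \Var(\hat{G}_{\Phi}^{\text{$f$-HNCA},i})$$
elementwise, which is the desired conclusion.

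First I would treat the case $\ch(\Phi)\cap\widetilde{\an}(f^i)\neq\emptyset$. Conditioning on $(\mb(\Phi),\widetilde{\pa}(f^i)\setminus\Phi)$ makes the only remaining randomness in $\hat{G}_{\Phi}^{\text{RE},i}$ the variable $\Phi$ itself: the score factor is a deterministic function of $\Phi$ and $\pa(\Phi)\subseteq\mb(\Phi)$, while $f^i$ becomes the deterministic function $f^i_\Phi(\Phi)$ because all of its other arguments are fixed. Then I would invoke the Markov blanket property to assert that $\P(\Phi=\phi\mid\mb(\Phi),\widetilde{\pa}(f^i)\setminus\Phi)=\P(\Phi=\phi\mid\mb(\Phi))$, and substitute in the closed form from Equation~\ref{markov_blanket_conditional_expression}. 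After canceling $\pi_\Phi(\phi\mid\pa(\Phi))$ against the score $\partial\log\pi_\Phi(\phi\mid\pa(\Phi))/\partial\theta_\Phi$, the conditional expectation collapses to exactly $\sum_\phi\rho_\Phi(\phi)\frac{\partial\pi_\Phi(\phi\mid\pa(\Phi))}{\partial\theta_\Phi}f^i_\Phi(\phi)$, i.e.\ $\hat{G}_{\Phi}^{\text{$f$-HNCA},i}$. This is essentially the calculation carried out in Appendix~\ref{Generalized_HNCA_derivation}, just read as a conditional expectation identity rather than an unbiasedness check.

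For the complementary case $\ch(\Phi)\cap\widetilde{\an}(f^i)=\emptyset$, I would condition instead on $(\pa(\Phi),\widetilde{\pa}(f^i)\setminus\Phi)$. Since the children of $\Phi$ are not ancestors of $f^i$, they play no role in $f^i$, and $\Phi$ is independent of $\widetilde{\pa}(f^i)\setminus\Phi$ given $\pa(\Phi)$; thus $\P(\Phi=\phi\mid\pa(\Phi),\widetilde{\pa}(f^i)\setminus\Phi)=\pi_\Phi(\phi\mid\pa(\Phi))$, and the conditional expectation reduces to $\sum_\phi\frac{\partial\pi_\Phi(\phi\mid\pa(\Phi))}{\partial\theta_\Phi}f^i_\Phi(\phi)$, matching the definition of $\hat{G}_{\Phi}^{\text{$f$-HNCA},i}$ in this branch.

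With the conditional expectation identity established in each case, applying the law of total variance elementwise completes the proof. The main obstacle is choosing the correct conditioning set: it must be rich enough to render $f^i$ a deterministic function of $\Phi$ (this is what forces the inclusion of $\widetilde{\pa}(f^i)\setminus\Phi$), yet compatible with a Markov-type property so that the conditional distribution of $\Phi$ simplifies either to $\P(\Phi\mid\mb(\Phi))$ or to $\pi_\Phi(\Phi\mid\pa(\Phi))$. Everything else is either bookkeeping or a direct reuse of the argument from Theorem~\ref{reduced_variance}.
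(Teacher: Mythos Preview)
Your proposal is correct and follows essentially the same approach as the paper: in each case you identify the appropriate conditioning set ($(\mb(\Phi),\widetilde{\pa}(f^i)\setminus\Phi)$ or $(\pa(\Phi),\widetilde{\pa}(f^i)\setminus\Phi)$), recognize that $\hat{G}_{\Phi}^{\text{$f$-HNCA},i}$ is the conditional expectation of $\hat{G}_{\Phi}^{\text{RE},i}$ with respect to that set (exactly as the derivations in Appendix~\ref{Generalized_HNCA_derivation} show), and then invoke the law of total variance. The paper's proof is structured identically.
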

\begin{proof}
We will separately consider the case where $\ch(\Phi)\cap\widetilde{\an}(f^i)\neq\emptyset$ and $\ch(\Phi)\cap\widetilde{\an}(f^i)=\emptyset$. First, when $\ch(\Phi)\cap\widetilde{\an}(f^i)\neq\emptyset$ 
We know from Equation~\ref{GHNCA_derivation_1} that can write $\hat{G}^{\text{$f$-HNCA},i}_{\Phi}$ as follows:
\begin{align*}
    \hat{G}^{\text{$f$-HNCA},i}_{\Phi}&=\E\left[\frac{\partial\log(\pi_\Phi(\Phi|\pa(\Phi)))}{\partial\theta_\Phi}f^i\middle|\mb(\Phi),f^i_\Phi(\phi)\right]\\
    &=\E\left[\hat{G}^{\text{RE},i}_{\Phi}\middle|\mb(\Phi),\widetilde{\pa}(f^i)\setminus\Phi\right].
\end{align*}
Now apply the law of total variance to rewrite the variance of the REINFORCE estimator as follows:
\begin{align*}
    \Var(\hat{G}^{\text{RE},i}_{\Phi})&=
    \E\left[\Var\left(\hat{G}^{\text{RE},i}_{\Phi}\middle|\mb(\Phi),\widetilde{\pa}(f^i)\setminus\Phi\right)\right]+\Var\left(\E\left[\hat{G}^{\text{RE},i}_{\Phi}\middle|\mb(\Phi),\widetilde{\pa}(f^i)\setminus\Phi\right]\right)\\
    &\geq \Var\left(\E\left[\hat{G}^{\text{RE},i}_{\Phi}\middle|\mb(\Phi),\widetilde{\pa}(f^i)\setminus\Phi\right]\right)\\
    &=\Var(\hat{G}^{\text{$f$-HNCA}}(\Phi)).
\end{align*}
For the case where $\ch^i(\Phi)=\emptyset$, we know from Equation~\ref{GHNCA_derivation_2} that
\begin{align*}
    \hat{G}^{\text{$f$-HNCA},i}_{\Phi}&=\E\left[\frac{\partial\log(\pi_\Phi(\Phi|\pa(\Phi)))}{\partial\theta_\Phi}f^i\middle|\pa(\Phi),\widetilde{\pa}(f^i)\setminus\Phi\right]\\
    &=\E\left[\hat{G}_{\Phi}^{\text{RE},i}\middle|\pa(\Phi),\widetilde{\pa}(f^i)\setminus\Phi\right].
\end{align*}
Now, again, apply the law of total variance to rewrite the variance in the REINFORCE estimator:
\begin{align*}
    \Var(\hat{G}^{\text{RE},i}_{\Phi})&=\E\left[\Var\left(\hat{G}^{\text{RE},i}_{\Phi}\middle|\pa(\Phi),\widetilde{\pa}(f^i)\setminus\Phi\right)\right]+\Var\left(\E\left[\hat{G}^{\text{RE},i}_{\Phi}\middle|\pa(\Phi),\widetilde{\pa}(f^i)\setminus\Phi\right]\right)\\
    &\geq \Var\left(\E\left[\hat{G}^{\text{RE},i}_{\Phi}\middle|\pa(\Phi),\widetilde{\pa}(f^i)\setminus\Phi\right]\right)\\
    &=\Var(\hat{G}^{\text{$f$-HNCA,i}}(\Phi)).
\end{align*}
\end{proof}

\section{Further Details of Discrete VAE Experiments}~\label{VAE_exp_details}
Here, we provide some additional detail on the methods used in our discrete VAE experiments.

We compare $f$-HNCA with REINFORCE and two stronger, unbiased, baselines for optimizing an ELBO of a VAE trained to generate MNIST digits. The other baselines are DisARM~\citep{dong2020disarm}, and REINFORCE leave one out (REINFORCE LOO; \citet{kool2019buy}).

REINFORCE LOO, based on the version used by~\citet{dong2020disarm}, samples two partial forward passes starting at each layer to compute its baseline. In other words, we first run a single forward pass to generate one sample form each $\vec{\Phi}_i=\vec{\phi}_i(1)$. All the function components that lie downstream of $\vec{\Phi}_i$ are summed up to produce one sample of the forward function components $\tilde{f}_i(1)$. This serves as the first of 2 samples used to construct the REINFORCE LOO gradient estimator in each layer. Then, in each layer, $i$ we also draw a second sample $\vec{\Phi}_i=\vec{\phi}_i(2)$ conditioned on $\vec{\phi}_{i-1}(1)$ (or $\vec{X}$ for $i=1$) all $\vec{\Phi}_j$ for $j>i$ are then resampled sequentially and the new sampled values used as input to the forward function components. This produces, for each layer, another sample of the forward function components which we'll call $\tilde{f}_i(2)$. This results in the following gradient estimator:
\begin{equation}\label{RLOO_grad}
    \hat{G}^{\text{RLOO}}(\Phi)=\begin{multlined}[t]\frac{1}{2}\Biggl(\frac{\partial\log(\pi_\Phi(\phi(1)|\pa(\Phi)))}{\partial\theta_\Phi}(\tilde{f}(1)-\tilde{f}(2))+\\
    \frac{\partial\log(\pi_\Phi(\phi(2)|\pa(\Phi)))}{\partial\theta_\Phi}(\tilde{f}(2)-\tilde{f}(1))\Biggr),
    \end{multlined}
\end{equation}
where we have suppressed the specific layer and written the estimator for a specific unit $\Phi$ in the vector $\vec{\Phi}_i$. Note that the computational cost of this procedure is quadratic in the number of layers, as we need to resample a partial forward pass to generate $\tilde{f}_i(2)$ for each layer $i$. DisARM has a similar computational requirement, requiring forward resampling to generate an antithetic sample in each layer.

We also experimented with another version of REINFORCE LOO that avoided this quadratic scaling of computational cost with number of layers. This second version of REINFORCE LOO used 2 independent forward passes for each input to construct a baseline, we call this REINFORCE LOO IS, for independent sample. Since REINFORCE LOO IS doesn't require sampling partial forward passes for each layer, it avoids a quadratic scaling of compute time with number of network layers which occurs for both DisARM and REINFORCE LOO. More precisely, rather than resampling in each layer, REINFORCE LOO IS simply generates 2 full forward passes, using the downstream function components of the first sample in each layer $i$ to define $\tilde{f}_i(1)$ and $\pi_\Phi(\phi(1)|\pa(\Phi)))$ and the downstream components of the second to define $\tilde{f}_i(2)$ and $\pi_\Phi(\phi(2)|\pa(\Phi)))$. The form of the resulting estimator is otherwise the same as Equation~\ref{RLOO_grad}. The drawback is that the baselines used for REINFORCE LOO IS will be less correlated, since unlike REINFORCE LOO its baseline uses a different sample for nodes upstream of the layer for which the baseline is being computed. Empirically we found this version to perform just slightly worse than the first version, hence we chose to omit the results to avoid clutter.

In $f$-HNCA with Baseline, for each layer, we maintain a scalar moving average of the sum of those components of $f$ with mediated connections (those highlighted in pink and orange in Figure~\ref{fig:generative_model}) and subtract it from the leftmost sum over $i$ in Equation~\ref{generalized_HNCA} to produce a centered learning signal. We use a discount rate $0.99$ for the moving average. For REINFORCE with baseline we use a similar moving average baseline, but in this case constructed as the sum of all downstream function components.


As in our contextual bandit experiments, we use dynamic binarization. Following~\citet{dong2020disarm}, our decoder and encoder each consist of a fully connected, stochastic feedforward neural network with 1, 2 or 3 layers, each hidden layer has 200 Bernoulli units. As in Section~\ref{bandit_experiments}, we train using ADAM optimizer with a learning rate $10^{-4}$ and batch-size of $50$. We train for $840$ epochs, approximately equivalent to the $10^6$ updates used by~\citet{dong2020disarm}. For consistency with prior work, we use Bernoulli units with a zero-one output. Unlike~\citet{dong2020disarm} we use ADAM to train the parameters of the prior as well, rather than using SGD.

For all methods, we train each unit based only on downstream function components as opposed to using the full function $f$. Also , for all methods, we train direct gradients (i.e. the right expectation in Equation~\ref{gradient_decomp}) with only a single sample per training example. In practice, it may be natural to use multiple samples in methods like REINFORCE LOO given that we draw multiple samples to construct the estimator of the left expectation anyways. This choice was made to reduce confounding, given we are mainly interested in how well different method estimate the left expectation. 

\section{Multisample Test-set Bounds}\label{test_set_ELBOs}
In this section, we report 100 sample ELBOs on the MNIST test-set for networks trained with each of the algorithms evaluated in Section~\ref{VAE_experiments}. Multi-sample bounds, as introduced by~\citet{burda2015importance} provide a tighter bound on the data likelihood under the generative model. Note that these results simply compute a multi-sample bound using the final trained encoder and decoder and, unlike~\citet{burda2015importance}, still use the single-sample ELBO as a training objective. These results are presented in Table~\ref{test_set_ELBOs_table}. These results show the same trend as the training ELBOs in Figure~\ref{fig:generative_model_plots}.
\begin{table}[htb]
\centering
\begin{tabular}{lllll}
                        & \textbf{1 Layer} & \textbf{2 Layer} & \textbf{3 Layer} &  \\ \cline{1-4}
HNCA                    &\textbf{-107.5$\pm$0.1}&-103.7$\pm$0.1&-102.1$\pm$0.2&  \\
HNCA with Baseline      &\ NA &\textbf{-97.3$\pm$0.1}& \textbf{-94.6$\pm$0.2}&  \\
DisARM                  &-108.2$\pm$0.2&-99.27$\pm$0.06&-96.7$\pm$0.1&  \\
REINFORCE LOO           &-108.3$\pm$0.1&-99.5$\pm$0.1&-96.9$\pm$0.1&  \\
REINFORCE               &-120.1$\pm$0.2&-115.1$\pm$0.1&-114.7$\pm$0.1&  \\
REINFORCE with Baseline &-110.6$\pm$0.1&-102.8$\pm$0.2&-100.2$\pm$0.1&  \\
\end{tabular}
\caption{100 sample test-set likelihood bounds for networks trained with each of the algorithms evaluated in Section~\ref{VAE_experiments}. Each cell provides the mean and 95\% confidence interval from 5 random seeds. The best result for each Layer count is written in bold.}\label{test_set_ELBOs_table}
\end{table}

\section{HNCA Ablation Results}\label{ablation}
\begin{figure}[htb]
    \includegraphics[width=\textwidth]{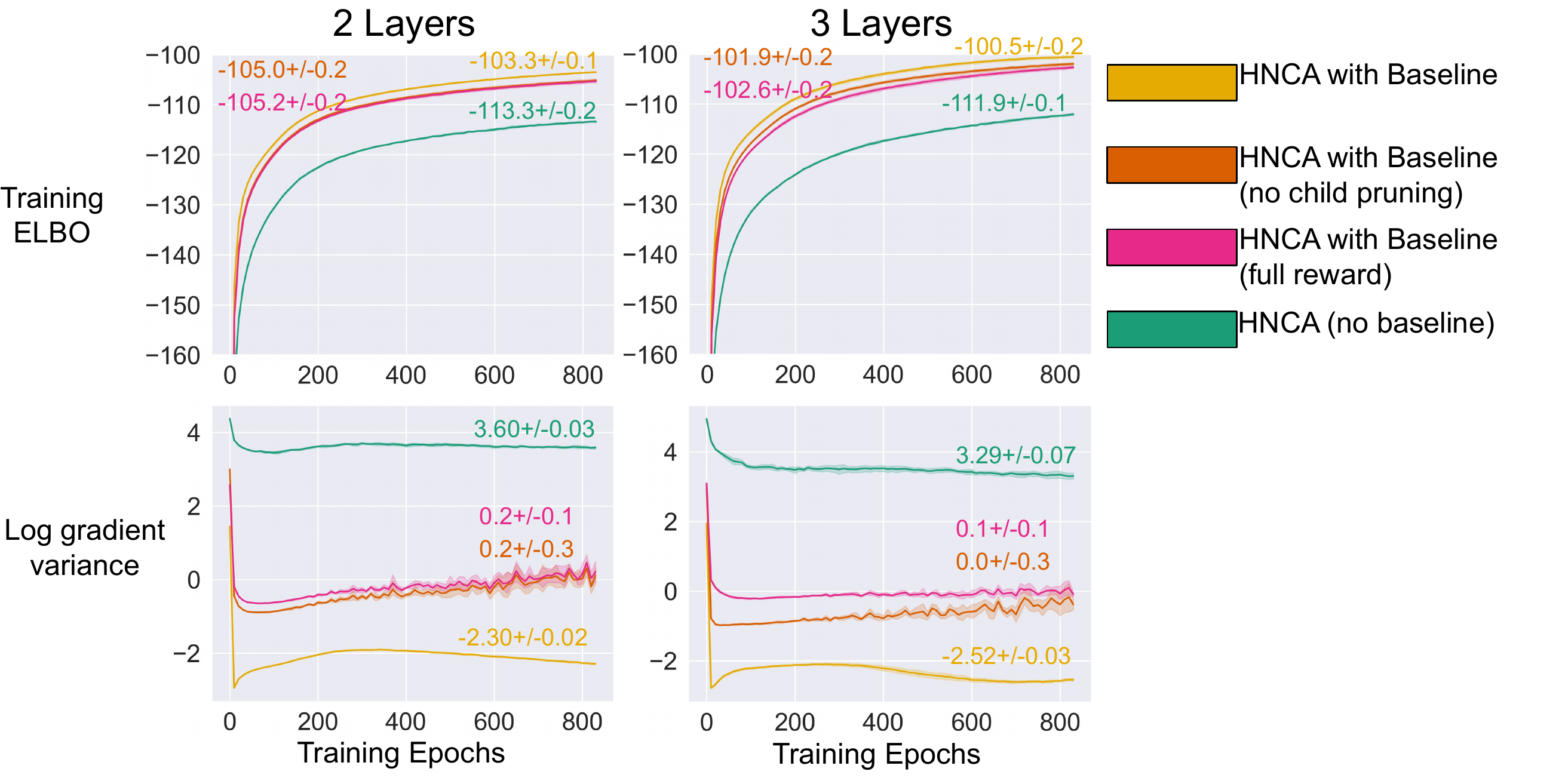}
    \caption{Training stochastic VAEs to generate MNIST digits with $f$-HNCA with Baseline with different aspects ablated. We omit the single-layer VAE as the ablations are not meaningful in this case. No child pruning refers to unnecessarily multiplying by $\rho_{\Phi}(\phi)$ even when no children have downstream connections to a function component, that is Equation~\ref{no_pruning_HNCA}. Full reward, does the same as no child pruning, in addition to unnecessarily including upstream function components in the estimator. For full reward, these additional function components are also included in the moving average baseline. Each line represents the average of 5 random seeds with error bars showing $95\%$ confidence interval. Final values at the end of training are written near each line in matching color. The top row shows the online training ELBO. The bottom row shows the natural logarithm of the mean gradient variance. Mean gradient variance is computed as the mean over parameters and batches of the per-parameter empirical variance over examples in a training batch of $50$. It appears that unnecessarily including children has a significant negative impact on $f$-HNCA with baseline, while the impact of including upstream function components is negligible.}
    \label{fig:HNCA_ablation_plots}
\end{figure}
In this section, we assess the impact of avoiding propagating credit through children in $f$-HNCA when a particular function component has only direct connections (those highlighted in green in Figure~\ref{fig:generative_model}). In particular, instead of using
\begin{equation}\label{original_HNCA}
\hat{G}^{\text{$f$-HNCA},i}_\Phi(\phi)\defeq\begin{cases}
\sum_\phi\rho_\Phi(\phi)\frac{\partial\pi_\Phi(\Phi|\pa(\Phi))}{\partial\theta_\Phi}f^i_\Phi(\phi) &\text{ if }\ch(\Phi)\cap\widetilde{\an}(f^i)\neq \emptyset\\
\sum_\phi\frac{\partial\pi_\Phi(\Phi|\pa(\Phi))}{\partial\theta_\Phi}f^i_\Phi(\phi)&\text{ if }\ch(\Phi)\cap\widetilde{\an}(f^i)= \emptyset
\end{cases},
\end{equation}
we simply use
\begin{equation}\label{no_pruning_HNCA}
\hat{G}^{\text{$f$-HNCA},i}_\Phi(\phi)\defeq\begin{cases}
\sum_\phi\rho_\Phi(\phi)\frac{\partial\pi_\Phi(\Phi|\pa(\Phi))}{\partial\theta_\Phi}f^i_\Phi(\phi) &\text{ if }\ch(\Phi)\neq \emptyset\\
\sum_\phi\frac{\partial\pi_\Phi(\Phi|\pa(\Phi))}{\partial\theta_\Phi}f^i_\Phi(\phi)&\text{ if }\ch(\Phi)= \emptyset
\end{cases},
\end{equation}
multiplying by $\rho_\Phi(\phi)$ as long as the unit has children, even if no children have downstream connections to the function component, that is even if $\ch(\Phi)\cap\widetilde{\an}(f^i)= \emptyset$. In this case we also include these function components in the subtracted baseline. We additionally investigate the impact of including redundant upstream function components in the HNCA gradient estimator. The results for the hierarchical VAE task are shown in Figure~\ref{fig:HNCA_ablation_plots}. Propagating credit through all children resulted in significantly worse performance for $f$-HNCA with Baseline. The additional impact of including upstream function components is minimal. Presumably, the subtracted baseline is able to mitigate the majority of increased variance resulting from including these function components.

\end{document}